\newcommand{\orcid}[1]{\href{https://orcid.org/#1}{\includegraphics[width=10pt]{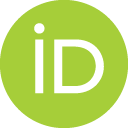}}}
\def\etal{{\em et al}}
\def\bA{\textbf{A}}
\def\ibA{\textit{\textbf{A}}}
\def\ibB{\textit{\textbf{B}}}
\def\ibF{\textit{\textbf{F}}}
\def\ibT{\textit{\textbf{T}}}
\def\ibU{\textit{\textbf{U}}}
\def\ibV{\textit{\textbf{V}}}
\def\hatA{\hat{\textit{\textbf{A}}}}
\def\iba{\textit{\textbf{a}}}
\def\ibf{\textit{\textbf{f}}}
\def\ibu{\textit{\textbf{u}}}
\def\ibup{\textit{\textbf{u}}'}
\journalname{Research Article}
\begin{document}

\title{A revisit of the normalized eight-point algorithm and a self-supervised deep solution}

\titlerunning{A revisit of the normalized eight-point algorithm and a self-supervised deep solution}        % For running head

\author{Bin Fan$^{1}$ \orcid{0000-0002-8028-0166} \and
  Yuchao Dai$^1$ \orcid{0000-0002-4432-7406} \and 
  Yongduek Seo$^2$ \orcid{0000-0002-0570-2197} \and 
  Mingyi He$^1$ \orcid{0000-0003-2051-6955}
}

\authorrunning{B. Fan \etal} % if too long for running head

\institute{
$^1$ School of Electronics and Information, Northwestern Polytechnical University and Shaanxi Key Laboratory of Information Acquisition and Processing, Xi'an, China (Email: binfan@mail.nwpu.edu.cn, \{daiyuchao, myhe\}@nwpu.edu.cn). \\
$^2$ Department of Art and Technology, Sogang University, Seoul, Korea (Email: yndk@sogang.ac.kr). \\
%First Author and Second Author are with institution, (department), city, (state), country. 
%(Email: xxx@xxx.xx.xx, xxx@xxx.xx.xx). \\
%Third Author is with institution, (department), city, (state), country. 
%(Email: xxx@xxx.xx.xx). \\
Corresponding author: Yuchao Dai.
}

\date{Received: date / Accepted: date}
% The correct dates will be entered by the editor

\maketitle

\begin{abstract}
The normalized eight-point algorithm has been widely viewed as the cornerstone in two-view geometry computation, where the seminal Hartley's normalization has greatly improved the performance of the direct linear transformation algorithm. 
A natural question is, whether there exists and how to find other normalization methods that may further improve the performance as per each input sample.
In this paper, we provide a novel perspective and propose two contributions to this fundamental problem: 1) we revisit the normalized eight-point algorithm and make a theoretical contribution by presenting the existence of different and better normalization algorithms; 
2) we introduce a deep convolutional neural network with a self-supervised learning strategy for normalization. Given eight pairs of correspondences, our network directly predicts the normalization matrices, thus learning to normalize each input sample.
Our learning-based normalization module can be integrated with both traditional (e.g., RANSAC) and deep learning frameworks (affording good interpretability) with minimal effort.
Extensive experiments on both synthetic and real images demonstrate the effectiveness of our proposed approach.

% Please provide 4 to 6 keywords which can be used for indexing purposes.
\keywords{Two-view geometry \and Eight-point algorithm \and Data normalization \and Permutation invariance \and Self-supervised}

\end{abstract}

\section{Introduction}
\label{sec:intr}
Geometric computation has long been one of the major issues in computer vision. 
In particular, two-view geometry computation is a central building block for three-dimensional (3D) modeling and camera motion estimation. For example, self-driving is implemented through the technology of simultaneous localization and mapping (SLAM) and structure from motion (SfM).
Among many important core algorithms, the eight-point algorithm \cite{Longuet_Reconstructing_Nature_1981} computes the fundamental matrix from a set of eight or more point correspondences between two views, which has the advantage of the simplicity of implementation.
However, it was extremely susceptible to image noise and hence was of very limited practical use until Hartley devised a normalized eight-point algorithm in his seminal work \cite{Hartley_Normalization_TPAMI_1997}, which shows that by preceding the algorithm with a data normalization (translation and scaling) of the coordinates of the correspondences, the results obtained are comparable to those of the best iterative algorithms.
%
%The normalization schemes, \emph{i.e.} isotropic and anisotropic normalization, both improve the numerical accuracy of fundamental matrix estimation, but the anisotropic normalization does not result in significantly better results than the isotropic counterpart \cite{Hartley_MVG_2003}. 
As a consequence, with its simple strategy of translation and scaling, the isotropic normalization, now termed as {Hartley's normalization}, has gradually become an indispensable component of many geometric computations not only for fundamental matrix estimation \cite{dai2016rolling} but also for homography \cite{zhao2021homography}, ellipse fitting \cite{szpak2015guaranteed}, bundle adjustment \cite{zhang2014structure}, etc. 
%%%%  {\color{red}and thus we only compare with it in our experiments.}

One particular aspect of Hartley's normalization in regard to the direct linear transformation (DLT) formulation of the fundamental matrix computation is that it allows the DLT solution to possess a better condition number. Therefore, when the solution matrix is enforced to have rank 2, a much more stable estimate of the fundamental matrix is obtained; this is important because it is the starting point of all the structure and motion computations such as guided correspondence search, camera and structure optimization, and 3D reconstruction for more than two views. 
Consequently, enforcing the rank-2 constraint as much as possible at the DLT stage becomes an interesting topic of study. For example, M{\" u}hlich and Mester \cite{Muhlich_Subspace_SCIA_2001} performed a statistical analysis to obtain an optimal data normalization for DLT fundamental matrix computation and showed that Hartley's normalization can be expected to work well even though it is not identical to the optimal transform. 
Mair et al. \cite{Elamr_ErrorPropagation_IROS_2013} performed further error analysis to obtain a better performance than Hartley's eight-point algorithm.  
The work of da Silveira and Jung \cite{daSilveira_Perturbation_CVPR_2019} presented a perturbation analysis of the eight-point algorithm for a wide field of view cameras.
In contrast to these works based on statistical analysis, this paper tries to determine the mechanism of data normalization through deep learning without specific statistical modeling.
Considering that the fundamental matrix estimation is strongly affected by the error distribution of the feature matching algorithm, we argue that a data normalization scheme can be exploited to achieve DLT solutions of improved rank-2 condition by learning the error distribution from the data themselves; this approach coincides with the views of Refs. \cite{Muhlich_TLS_ECCV_1998,Muhlich_Subspace_SCIA_2001,daSilveira_Perturbation_CVPR_2019}. 
In particular, as displayed in Fig.~\ref{fig:fig1}, we propose to learn a {data-driven normalization} scheme under the standard configuration of eight correspondences.

Currently, the success of deep learning in high-level vision tasks has been gradually extended to multi-view geometry problems such as homography \cite{Detone_DeepHomography_arxiv_2016}, fundamental matrix \cite{Ranftl_DeepFundamental_ECCV_2018}, bundle adjustment \cite{Tang_BANet_ICLR_2018}, plane sweeping \cite{Sunghoon_DPSNet_ICLR_2019,fan2021rs}, and rolling-shutter modeling \cite{fan2023rolling,fan2022rolling}.
% Depth estimation \cite{Ummenhofer_Demon_CVPR_2017}, 
However, this success has not been extended to the normalized eight-point algorithm and a different or better normalization scheme has so far not been presented nor replaced by the deep learning pipeline. 
This is mainly due to the following obstacles: {1}) gradient descent cannot be trivially applied as mentioned in Ref. \cite{Ranftl_DeepFundamental_ECCV_2018}; {2}) the network must be invariant to the permutation of the correspondences, {i.e.}, different orderings of the input data should produce the same normalization;
and {3}) a large amount of labeled input and output data should be used for supervised learning (in this case, the input is eight-point correspondences and the output is the optimal data normalization).
In this paper, we overcome these problems by back-propagating through a singular value decomposition (SVD) layer and 
% incorporating the rank-2 constraint into the loss function for learning
using a self-supervised learning mechanism in the permutation-invariant network architecture; this also solves the issue of large training data requirements.
Our approach not only produces an interpretable pipeline of fundamental matrix estimation but can also be easily embedded in other robust frameworks such as %traditional pipeline (RANSAC \cite{Fischler_RANSAC_ACM_1981}) and 
the differentiable random sample consensus (RANSAC) \cite{Brachmann_DSAC_CVPR_2017}.
%
% In this paper, we show the existence of different and better normalization algorithms and propose a self-supervised learning strategy to learn a robust normalization scheme for estimating fundamental matrix from the input data, 
% %%% {\color{red}from the minimal 8 point correspondences,}
% which is termed as \emph{learning-based normalization}.
%
Through experiments, our learning-based normalization demonstrated superior performance to Hartley's normalization and a good generalization ability across different datasets.
%% More importantly, the key novelty of our work is to replace the Hartley normalization with the proposed learned normalization.
%
Our main contributions can be summarized as follows.
%Our main contributions are threefold:
\begin{itemize}
	%	We prove that the minimal condition number of the transformed coefficient matrix cannot be derived directly while its ideal counterpart can be obtained by our approach to be more in line with the following singularity constraint enforcement step, which also reveals our argument that the condition number of linear solution and the singular constraint of nonlinear projection are interactive.
%	\vspace{-1mm}
	\item[1)] We propose a self-supervised learning-based deep solution for normalizing DLT fundamental matrix estimation under the standard configuration of eight point correspondences.
	\vspace{2mm}
	\item[2)] We make a theoretical contribution by demonstrating the existence of different and better normalization algorithms beyond Hartley's normalization.
        \vspace{2mm}
	\item[3)] Extensive experiments on both synthetic and real images demonstrate the effectiveness and good generalizability of our proposed approach.
\end{itemize}

\begin{figure*}[!t]
	\centering
	\includegraphics[width=0.867\textwidth]{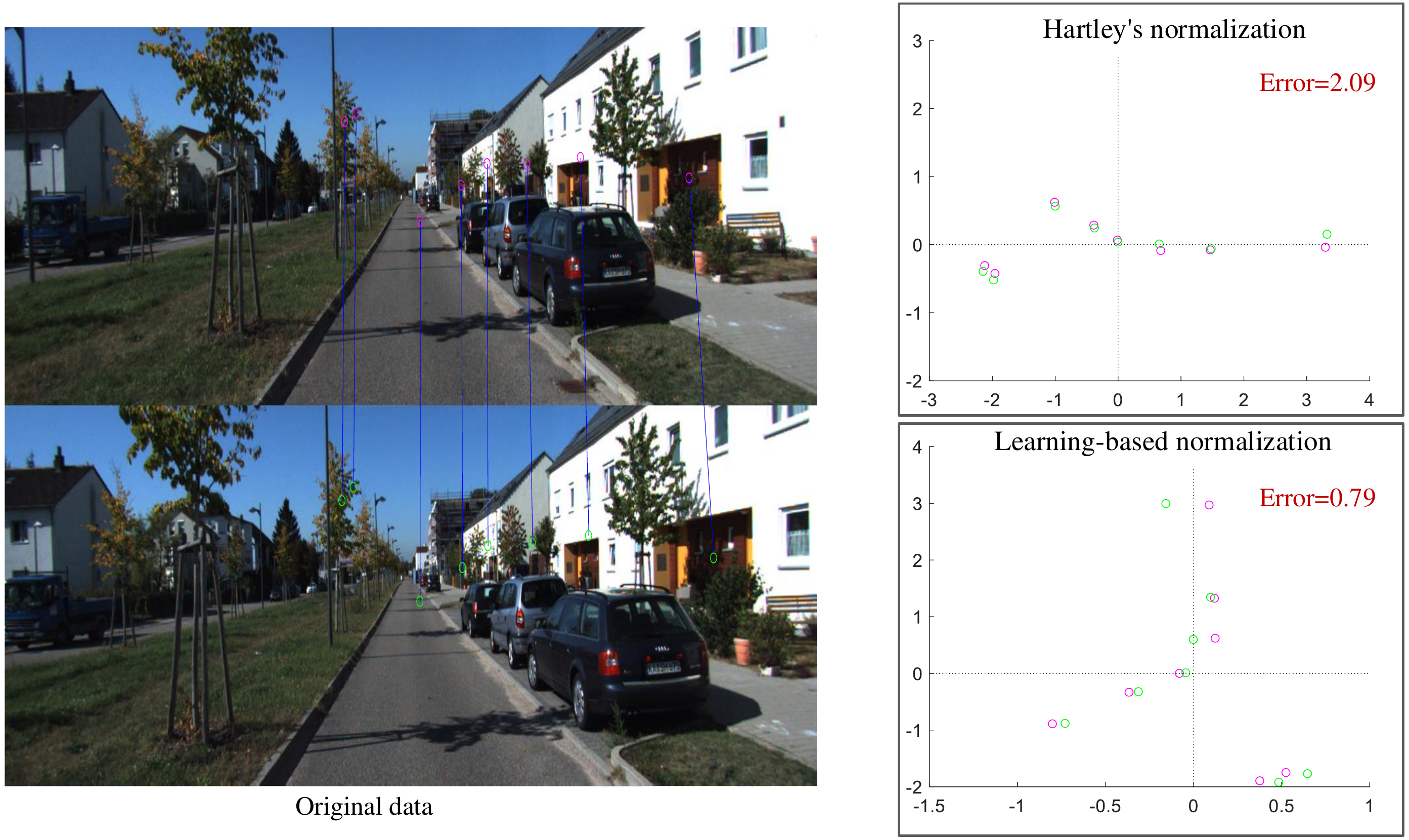}%fig1
	\caption{Distributions of normalized image coordinates by using Hartley’s normalization algorithm (upper right) and our learning-based normalization approach (bottom right), respectively. Eight pairs of point correspondences are obtained from the two street images on the left.
 Note that in the right figure, the coordinate axes represent the normalized image coordinates in the horizontal and vertical directions, and the ``error'' refers to the symmetry epipolar distance, which can better characterize the estimation accuracy of the two-view geometry.
		Our approach learns a robust normalization scheme adapted to the input data, obtains a better distribution spread of the normalized point coordinates, and eventually leads to improved performance in the computation of the fundamental matrix. % The magenta and green circles represent the correspondences and their normalized coordinates under different normalization strategies. 
		% 	\YD{Re-draw this figure to use the single column setup. From this up-bottom to left-right.}
		\label{fig:fig1}}
  \vspace{-2mm}
\end{figure*}

\section{Related work}
In this section, we briefly review related work in traditional two-view geometry computation and deep learning-based multi-view geometry learning.

\subsection{Two-view geometry estimation}
The normalized eight-point algorithm \cite{Hartley_Normalization_TPAMI_1997} significantly improves the numerical accuracy of the fundamental matrix and extends the scope of applications due to the improved condition number of the hand-designed normalization scheme. Since this seminal work, there have been various follow-up studies on the uncertainty in fundamental matrix estimation and the relationships between the epipolar constraint and corresponding errors. 
Csurka {et al.} \cite{Csurka_Uncertainty_CVIU_1997} proposed a method to simultaneously estimate the fundamental matrix and its uncertainty. M{\" u}hlich and Mester \cite{Muhlich_TLS_ECCV_1998} concluded that the normalization strategy can ensure that the two-view non-iterative motion estimation algorithm maintains unbiasedness and consistency.
They further introduced a normalization transformation scheme based on the bound of epipolar constraint errors obtained by assuming known feature matching covariance, which was also used to extend the existing first-order error propagation analysis of the eight-point algorithm in Ref. \cite{Elamr_ErrorPropagation_IROS_2013}.
However, this approach was still not optimal because the error distribution of the input data was not considered \cite{Muhlich_Subspace_SCIA_2001}. The closed-form computation of the uncertainty of the fundamental matrix was presented in Ref. \cite{Frederic_Uncertainty_BMVC_2008} to recover correspondences via the uncertain equilibrium of motion estimation. Chojnacki and Brooks \cite{Chojnacki_Revisiting8pt_TPAMI_2003} revisited the normalized eight-point algorithm and presented a statistical model of data distribution by merging the statistical approach of Ref. \cite{Muhlich_TLS_ECCV_1998}, which was further extended in Ref. \cite{Chojnacki_Consistency_JMIV_2007} by introducing a structured model for the data distribution. 
%   They argued that the normalized eight-point algorithm was approximately statistically consistent while ignoring the singularity constraint.
In addition, da Silveira and Jung \cite{daSilveira_Perturbation_CVPR_2019} performed perturbation analysis for the fundamental matrix estimation without considering any kind of matching error distribution.

\subsection{Deep learning-based geometry estimation}
Recently, the success of deep learning in high-level vision tasks has been gradually extended to various multi-view geometry estimation problems. DeTone {et al.} \cite{Detone_DeepHomography_arxiv_2016} employed a deep convolutional neural network (CNN) to regress a homography from a pair of input images in an end-to-end manner. A follow-up study \cite{Nguyen_DeepHomographyUnsupervised_Robot&Automation_2018} developed the unsupervised variant by replacing direct supervision with image-based loss. This pipeline has been extended to fundamental matrix estimation, where a fundamental matrix is directly regressed from a pair of stereo images without correspondences \cite{Omid_DeepFundamental_wo_corresponences_ECCV_2018}.
Ranftl and Koltun \cite{Ranftl_DeepFundamental_ECCV_2018} treated the fundamental matrix estimation problem as a weighted homogeneous least-squares problem, where the matching weights and fundamental matrix are simultaneously estimated by using supervised deep networks. With the availability of camera intrinsics, Yi {et al.}  \cite{Yi_LearningCorrespondences_CVPR_2018} recovered the essential matrix from putative correspondences with little training data and limited supervision, thus finding good correspondences for wide-baseline stereo.
Furthermore, Probst {et al.} \cite{Probst_2019_CVPR} proposed an unsupervised learning framework for consensus maximization, in the context of solving 3D vision problems such as 3D-3D matching \cite{zhang2022learning,zhang2022searching}, and image-to-image matching (homography and fundamental matrix).
DSAC \cite{Brachmann_DSAC_CVPR_2017} is a differentiable counterpart of RANSAC and can also be leveraged as a robust optimization component for other deep learning pipelines.

% The depth map and camera motion estimation were carried out simultaneously from a sequence of images within the unsupervised learning framework \cite{Zhou_LearnDeepUnsupervise_CVPR_2017}.

% PoseNet \cite{Kendall_PoseNet_ICCV_2015} adopted a CNN to estimate the 6-DOF (degree of freedom) camera pose from a single image.

Different from existing work in deep learning-based multi-view geometry computation, our self-supervised learning strategy removes the need for supervisory signals and thus generalizes well across different datasets. Furthermore, our learning-based normalization module can be integrated with both traditional and deep learning frameworks.

%%%  Similarly, \cite{Xu_Grobner_3DV_2019} designed a network to automatically select a good elimination template for the Gr$\ddot{\rm o}$bner basis solver.

%================================================================================
\section{A revisit of the normalized eight-point algorithm}\label{sec:revisit}

We use capital letters, $\ibA$, $\ibB$, etc., to denote matrices. The operation of reshaping a matrix into a vector is denoted by $\mathrm{vec}(\cdot)$, defined as $\mathrm{vec}(\ibA) = [\iba_1^T,...,\iba_N^T]^T$, where $\iba_i$ is the $i$-th column vector of $\ibA$ and $N$ is the number of columns. Its inverse operation is denoted as $\mathrm{mat}(\cdot)$.

Given a pair of correspondences $\textit{\textbf{u}}'_i$ and $ \textit{\textbf{u}}_i$ between two views, the epipolar constraint is expressed as
\begin{align}\label{eq:1}
{\textit{\textbf{u}}'_i{^T}} \textit{\textbf{F}} \textit{\textbf{u}}_i = 0,
\end{align}
where $\ibF = [f_{ij}]$ is a $3 \times 3$ matrix of rank 2, termed as the fundamental matrix.
Collecting $N=8$ point correspondences $\{ (\ibup_i, \ibu_i) | i=1,\ldots,8\}$,
{i.e.}, the standard configuration,
we may rewrite Eq.~\eqref{eq:1} as a linear equation of $\ibf$:
\begin{equation}\label{eq:3}
\centering
{\textit{\textbf{A}}} \textit{\textbf{f}} = 0,
\end{equation}
where $\ibf = \mathrm{vec}(\ibF^T)$ is a nine-dimensional vector composed of stacked columns of $\ibF^T$, and $\ibA = [\iba_1, \ldots, \iba_8]^T$ is the $8\times9$ coefficient matrix with $\iba_i = \mathrm{vec}(\ibu_i {\textit{\textbf{u}}'}_i^{T})$ for $i=1,\ldots,8$. 
This approach provides the DLT formulation for computing $\ibF$, and a solution may be obtained through SVD of $\ibA$.

Despite its simplicity, the computation of the DLT for the eight-point algorithm \cite{Longuet_Reconstructing_Nature_1981} is extremely susceptible to noise in the image coordinate measurements.
%
%It is the \emph{normalized eight-point algorithm} by Hartley \cite{Hartley_Normalization_TPAMI_1997} that greatly improved the issue of numerical computation.
In the seminal work \cite{Hartley_Normalization_TPAMI_1997}, Hartley showed that the precision of the eight-point algorithm can be greatly improved by proper normalization of the image coordinates; this approach is the classic {normalized eight-point algorithm}.
%
% As shown in Fig.~\ref{fig:fig1}, 
% 
Hartley's normalization is designed to compute image translation and scaling such that the average distance of the transformed coordinates from the origin is $\sqrt 2$:
%, that is to say, the ``average'' homogeneous vector looks like $ %[1,1,1]^T $, and the normalizing transformation is defined as:
\begin{equation}\label{eq:51}
\textit{\textbf{T}}_{\rm H} = 
\left[ {\begin{array}{*{20}{c}}
	s&{}&{ - so_1}\\
	{}&s&{ - so_2}\\
	{}&{}&1
	\end{array}} \right],
\end{equation}
with $s$, $o_1$ and $o_2$ given by
\begin{equation}
% o_j = \frac{1}{N}\sum\limits_{i = 1}^N {\textit{\textbf{u}}_i^{{{(j)}}}}
% \,\,\,\,\,{\rm{and}}\,\,\,\,\,
% s = \frac{{\sqrt 2 }}{{\frac{1}{N}\sum\limits_{i = 1}^N {\left\| {\textit{\textbf{u}}_i^{{{(1:2)}}} - \textit{\textbf{o}}} \right\|} }},
o_j = \frac{1}{N}\sum\limits_{i = 1}^N {\textit{\textbf{u}}_i^{{{(j)}}}}
\,\,\,\,\,{\rm{and}}\,\,\,\,\,
s = \frac{{\sqrt 2 }}{{\frac{1}{N}\sum\limits_{i = 1}^N {\left\| {\textit{\textbf{u}}_i - \textit{\textbf{o}}} \right\|_2} }},
\end{equation}
where the superscript $ {j} $ denotes the $j$-th entry of vector $ \textit{\textbf{u}}_i $.
Given two normalization matrices $\textit{\textbf{T}}'$ and $\textit{\textbf{T}}$, %that satisfy $\hat{\textit{\textbf{u}}}'_i = \textit{\textbf{T}}'\textit{\textbf{u}}'_i$ and $\hat{\textit{\textbf{u}}}_i = \textit{\textbf{T}} \textit{\textbf{u}}_i$, 
Eq.~\eqref{eq:3} is transformed to
\begin{equation}\label{eq:4}
\hat{{\textit{\textbf{A}}}} \hat{\textit{\textbf{f}}} = \mathbf{0},
\end{equation}
where $\hat{{\textit{\textbf{A}}}} = [\hat{{\textit{\textbf{a}}}}_1,...,\hat{{\textit{\textbf{a}}}}_8  ]^T $ is the transformed coefficient matrix with $ \hat{{\textit{\textbf{a}}}}_i = {\rm{vec}}(\hat{\textit{\textbf{u}}}_i\hat{\textit{\textbf{u}}}'_i{^T}) 
= {\rm{vec}}( \textit{\textbf{T}}\textit{\textbf{u}}_i  \textit{\textbf{u}}'_i{^T}\textit{\textbf{T}}'^T )$.
In summary, the {normalized eight-point algorithm} mainly includes the following three steps.
\begin{enumerate}[1)]
	%\vspace{-3mm}
	\item Normalization: Transform the input image coordinates according to $\hat{\textit{\textbf{u}}}'_i = \textit{\textbf{T}}'\textit{\textbf{u}}'_i$ and $\hat{\textit{\textbf{u}}}_i = \textit{\textbf{T}} \textit{\textbf{u}}_i$.
	%	 \vspace{-3mm}
	\item Compute the corresponding fundamental matrix $\hat{{\textit{\textbf{F}}}}'$ to normalize data by
	\begin{enumerate}[a)]
		\item Direct linear transform: Determine $\hat{{\textit{\textbf{F}}}} = {\rm{mat}}(\hat{\textit{\textbf{f}}}) $ from the right singular vector $\hat{\textit{\textbf{f}}}$ corresponding to the smallest singular value of $ \hat{\ibA} $ defined in Eq.~\eqref{eq:4}.
		\item Singularity constraint enforcement: Replace $ \hat{{\textit{\textbf{F}}}} $ by $\hat{{\textit{\textbf{F}}}}'=\hat{{\textit{\textbf{U}}}}{\rm{diag}}(r_1,r_2,0)\hat{{\textit{\textbf{V}}}}^T$, where $\hat{{\textit{\textbf{F}}}} = \hat{{\textit{\textbf{U}}}} \hat{{\textit{\textbf{D}}}} \hat{{\textit{\textbf{V}}}}^T$ with $ \hat{\textit{\textbf{D}}} $ is a diagonal matrix $ \hat{\textit{\textbf{D}}} = {\rm{diag}}(r_1,r_2,r_3)$ satisfying $ r_1 \ge r_2 \ge r_3 $.
	\end{enumerate}
	%	 \vspace{-3mm}
	\item Denormalization: Set $ {{\textit{\textbf{F}}}} = {{\textit{\textbf{T}}}}'^T \hat{{\textit{\textbf{F}}}}' {{\textit{\textbf{T}}}} $.
\end{enumerate}

The condition number of $\textit{\textbf{A}}$ is defined as ${\kappa }(\textit{\textbf{A}}) = \left\| \ibA \right\|_2 \left\| \ibA^{+} \right\|_2$, where $\ibA^+$ is the pseudo-inverse of $\ibA$.
Its equivalent condition number may be defined as the ratio of the greatest to the second smallest singular values, ${\kappa }(\textit{\bA}) = \sqrt {d_1/d_8}$, for $\textit{\textbf{A}}^T\textit{\textbf{A}} = \textit{\textbf{U}}\mathrm{diag}(d_1,d_2,...,d_8,d_9)\textit{\textbf{U}}^T$. 
%
% Recently \cite{daSilveira_Perturbation_CVPR_2019} performed perturbation analysis with this condition number.
% %
% Related studies in
% \cite{Hartley_Normalization_TPAMI_1997} \cite{Chojnacki_Revisiting8pt_TPAMI_2003} \cite{Chojnacki_Consistency_JMIV_2007} and \cite{daSilveira_Perturbation_CVPR_2019} indicate that unsatisfactory performance of the eight-point algorithm is mainly due to the worse numerical conditioning of the coefficient matrix $\ibA$.
%
It has been reported in the literature \cite{Hartley_Normalization_TPAMI_1997,Chojnacki_Revisiting8pt_TPAMI_2003,Chojnacki_Consistency_JMIV_2007,daSilveira_Perturbation_CVPR_2019}
that the unsatisfactory performance of the eight-point algorithm is mainly due to the worse numerical conditioning of the coefficient matrix $\ibA$.
In fact, the condition number $k(\ibA)$ is extremely large, leading to two least eigenvalues relatively close to one another, and causing their corresponding eigenvectors to be mixed up and indistinguishable. %become fraught with the risk of confusion. 
As a result, a negligible perturbation of the matrix entries tends to cause a significant change in the smallest eigenvector, since it may fall anywhere in the proximity to the eigensubspace spanned by the similar eigenvectors associated with those virtual degenerate eigenvalues \cite{Chojnacki_Revisiting8pt_TPAMI_2003}.
It has been found that proper selection of normalization to the input image coordinates results in better numerical conditioning when carrying out linear DLT computation, and that the improved numerical conditioning provides with the smallest eigenvector of $\hat{\ibA}$ far less susceptible to interference \cite{Hartley_Normalization_TPAMI_1997,Chojnacki_Consistency_JMIV_2007}.
From this point, a natural question arises: {\it Can we achieve the ultimate optimal condition number $k(\hat\ibA)=1$?}
Below we figure out that the condition number of the transformed coefficient matrix cannot reach the optimum of 1.
%, which shows that the normalizing transformations for the minimum condition number can not be derived directly.
A follow-up question must be: {\it Can we have a better normalization transformation?}
This paper provides a positive answer in the next section. We develop a self-supervised CNN-based technique that learns the convolutional neural network weights based on a geometric loss function. It requires no ground truth labeling but has shown highly improved performance in various experiments.

% Considering that $\kappa (\hat{\ibA})$ is dependent on $\ibT$ and $ibT'$the spatial distribution of the corresponding image points \cite{daSilveira_Perturbation_CVPR_2019}\cite{Muhlich_TLS_ECCV_1998}\cite{Muhlich_Subspace_SCIA_2001}, a CNN-based deep solution for normalizing transformations may be traceable by using only observable data.

\begin{proposition}\label{p1}
	There is no pair of normalization matrices $\textit{\textbf{T}}'$ and $\textit{\textbf{T}}$ that results in $k(\hat{\ibA}) = 1$.
\end{proposition}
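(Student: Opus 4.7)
My plan is to reduce the requirement $\kappa(\hat{\textit{\textbf{A}}}) = 1$ to a strong combinatorial orthogonality condition on the normalized image points and then derive a contradiction from a $C_4$-free edge count in $\mathbb{R}^3$. First, $\kappa(\hat{\textit{\textbf{A}}}) = 1$ forces $\hat{\textit{\textbf{A}}} \hat{\textit{\textbf{A}}}^T = c\,\textit{\textbf{I}}_8$ for some $c > 0$, because equality in the ratio $\sqrt{d_1/d_8}$ requires all eight nonzero singular values of $\hat{\textit{\textbf{A}}}$ to coincide. (If either $\textit{\textbf{T}}$ or $\textit{\textbf{T}}'$ is singular, the rows of $\hat{\textit{\textbf{A}}}$ span a subspace of dimension at most six, forcing $d_8 = 0$ and $\kappa = \infty$, so we may assume $\textit{\textbf{T}}, \textit{\textbf{T}}'$ invertible throughout.) Using the identity $\mathrm{vec}(\textit{\textbf{x}}\textit{\textbf{y}}^T)^T \mathrm{vec}(\textit{\textbf{z}}\textit{\textbf{w}}^T) = (\textit{\textbf{x}}^T \textit{\textbf{z}})(\textit{\textbf{y}}^T \textit{\textbf{w}})$, the entries factor as
\begin{equation*}
(\hat{\textit{\textbf{A}}} \hat{\textit{\textbf{A}}}^T)_{ij} = (\hat{\textit{\textbf{u}}}_i^T \hat{\textit{\textbf{u}}}_j)\,(\hat{\textit{\textbf{u}}}_i'^T \hat{\textit{\textbf{u}}}_j'),
\end{equation*}
so the off-diagonal vanishing condition says: for every $i \neq j$, either $\hat{\textit{\textbf{u}}}_i \perp \hat{\textit{\textbf{u}}}_j$ or $\hat{\textit{\textbf{u}}}_i' \perp \hat{\textit{\textbf{u}}}_j'$.

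I then translate this into a covering problem. Define two graphs $G$ and $G'$ on vertex set $\{1,\ldots,8\}$ by placing the edge $\{i,j\}$ in $G$ iff $\hat{\textit{\textbf{u}}}_i \perp \hat{\textit{\textbf{u}}}_j$ and similarly for $G'$; the orthogonality condition is precisely $E(G) \cup E(G') = E(K_8)$, hence $|E(G)| + |E(G')| \geq 28$. The key structural step is to show that both $G$ and $G'$ are $C_4$-free, equivalently, any two distinct vertices have at most one common neighbor. Indeed, if distinct vertices $i, j$ admitted two common neighbors $k_1, k_2$ in $G$, then $\hat{\textit{\textbf{u}}}_{k_1}$ and $\hat{\textit{\textbf{u}}}_{k_2}$ would both lie in $\mathrm{span}\{\hat{\textit{\textbf{u}}}_i, \hat{\textit{\textbf{u}}}_j\}^{\perp}$, a one-dimensional subspace of $\mathbb{R}^3$, because $\textit{\textbf{u}}_i$ and $\textit{\textbf{u}}_j$ in homogeneous coordinates are linearly independent whenever the image points are distinct and invertibility of $\textit{\textbf{T}}$ preserves this independence. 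But then $\hat{\textit{\textbf{u}}}_{k_1}$ and $\hat{\textit{\textbf{u}}}_{k_2}$ would be parallel, which is impossible for two distinct homogeneous points (third coordinate $1$) mapped by an invertible $\textit{\textbf{T}}$.

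Finally, I apply a standard Zarankiewicz-type count: for any $C_4$-free graph on $n$ vertices, $\sum_v \binom{d_v}{2} \leq \binom{n}{2}$, and Cauchy--Schwarz converts this into $m^2/4 - m \leq 28$ when $n = 8$, giving $m \leq 12$. Hence $|E(G)| + |E(G')| \leq 24 < 28$, contradicting the covering requirement. The main obstacle, as I see it, is identifying the right structural property---namely that restricting orthogonality to $\mathbb{R}^3$ forces the orthogonality graph to be $C_4$-free; once that is isolated, the factorization of $\hat{\textit{\textbf{A}}} \hat{\textit{\textbf{A}}}^T$ and the Zarankiewicz bound are both routine.
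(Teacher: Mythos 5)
Your proof is correct, and it takes a genuinely different route from the paper's. The paper argues that $\kappa(\hatA)=1$ would force $\hatA$ to coincide with a row-orthonormalization $\textit{\textbf{P}}\ibA$ of $\ibA$, and then derives a contradiction from the fact that each $\mathrm{mat}(\hat{\iba}_i^T)$ has rank $1$ while a nontrivial combination $\sum_j p_{ij}\,\mathrm{mat}(\iba_j^T)$ of rank-one matrices generically has rank $3$. You instead evaluate the Gram matrix directly via the factorization $\hat{\iba}_i^T\hat{\iba}_j=(\hat{\ibu}_i^T\hat{\ibu}_j)(\hat{\ibu}_i'^T\hat{\ibu}_j')$, convert the vanishing of the off-diagonal entries into a covering of $K_8$ by two orthogonality graphs of vectors in $\mathbb{R}^3$, and rule that out with a Zarankiewicz-type $C_4$-free edge count ($\sum_v\binom{d_v}{2}\le\binom{8}{2}$ plus Jensen gives at most $12$ edges per graph, so the two graphs cover at most $24<28$ pairs). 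Your route buys three things: it is self-contained; it avoids the identification $\hatA=\textit{\textbf{P}}\ibA$, which is the delicate step of the paper's argument (a matrix with orthogonal rows of equal norm need not lie in the row space of $\ibA$, and $\hatA$ is in fact obtained from $\ibA$ by a column transformation $\ibA(\ibT'\otimes\ibT)^T$, not a row transformation); and it establishes the stronger statement that no pair of invertible $3\times3$ matrices of any form, not only affine normalizations, can achieve $\kappa(\hatA)=1$. The one hypothesis you use beyond the paper's is that the eight points are pairwise distinct within each image separately, which is what makes $\mathrm{span}\{\hat{\ibu}_i,\hat{\ibu}_j\}^{\perp}$ one-dimensional and forbids two non-parallel common neighbours; this is marginally stronger than full row rank of $\ibA$ (one can have $\ibu_i=\ibu_j$ with $\ibu_i'\neq\ibu_j'$ and still retain rank $8$), so you should state it explicitly as the standing non-degeneracy assumption of the eight-point configuration. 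Your treatment of singular $\ibT$ or $\ibT'$ (rank of $\hatA$ drops to at most $6$, so $d_8=0$) is also correct and closes a case the paper leaves implicit.
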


\begin{proof}
	(Proof by contradiction) For the full row rank matrix $\ibA$, there must be an invertible matrix $\textit{\textbf{P}} = [p_{ij}] $ such that $\ibA=\textit{\textbf{P}}^{-1}\textit{\textbf{Q}}$ holds, where the matrix $\textit{\textbf{Q}} \in \mathbb{R}^{8\times9}$ also has full row rank \cite{Horn_MatrixAnalysis_2012}. Moreover, one can assume that each row of $\textit{\textbf{Q}}$ represents a standard orthonormal basis of the $9$-dimensional subspace, which is easily achieved by matrix decomposition \cite{Horn_MatrixAnalysis_2012}, such as Gram-Schmidt orthogonalization, QR decomposition, and SVD decomposition.
	
	The condition number ${\kappa (\hat{\textit{\textbf{A}}}) =1}$ if and only if $\hat{\ibA}\hat{\ibA}^T = c\textit{\textbf{I}}$, where $c$ is a non-zero positive constant \cite{Horn_MatrixAnalysis_2012,Chen_Cond_ECNU_1986}; this implies that the rows of $\hat{\textit{\textbf{A}}}$ make up eight orthogonal bases %(\eg standard orthogonal basis) 
	of the $9$-dimensional subspace up to a fixed-length scale $\sqrt{c}$.
	Therefore, in order to achieve %the condition number of normalized coefficient matrix, \ie 
	$\kappa (\hat{\textit{\textbf{A}}}) = 1$, 
	the two invertible transformations $\textit{\textbf{T}}'$ and $\textit{\textbf{T}}$ should make $\hatA = \textit{\textbf{Q}} = \textit{\textbf{P}}\ibA$ hold, {i.e.},
	\begin{equation}\label{eq:5}
	{\rm{mat}}(\hat{\textit{\textbf{a}}}_i^T) 
	= 
	\sum\limits_{j = 1}^8 {{p_{ij}} \cdot {\rm{mat}}({\iba_j^T}}), \quad     i=1,...,8.  
	\end{equation}
	%
	%Multiplication of the invertible matrix does not change the rank such that 
	Note that 
	$   
	{\rm{rank}}({\rm{mat}}(\hat{\textit{\textbf{a}}}_i^T)) =
	{\rm{rank}}({\rm{mat}}(\iba_{i}^T)) = 1
	$
	=
	$
	{\rm{rank}}(\hat{\textit{\textbf{u}}_i}  \hat{\textit{\textbf{u}}}'_i{^T}) =
	{\rm{rank}}(\textit{\textbf{u}}_i \textit{\textbf{u}}'_i{^T})
	%= 1
	$ 
	for any $ i\in[1,8]. $
	%\end{equation}
	%
	Except for the trivial configuration in which ${\textit{\textbf{P}}} = \textit{\textbf{I}}$, the rank of the sum on the right-hand-side must exist to be equal to 3 for any $\ibT'$ and $\ibT$ ({e.g.} given in Eq.~\eqref{eq:51}); so Eq.~\eqref{eq:5} cannot be established. That is, there are no normalization matrices $\ibT'$ and $\ibT$ to make ${\kappa (\hat{\textit{\textbf{A}}}) =1}$ tenable.
\end{proof}

%=====================================================================
\section{Learning-based normalization with self-supervised CNNs}

%\subsection{An Optimization Perspective to Normalization}

%Using Hartley normalization is mandatory when we perform linear computations for geometric quantities such as the fundamental matrix. 

This section develops a machine learning model that produces $\ibT$ and $\ibT'$, the two data normalization matrices, which result in a better estimation of ${\ibF}$ than Hartley's normalization for eight input correspondences.
As discussed in Section \ref{sec:revisit}, the estimation of the fundamental matrix has two main steps. First, the input image coordinates are normalized by $\ibT$ and $\ibT'$ to construct the data matrix $\hat{\ibA}$, and the solution $\hat{\ibf}$ is obtained. Second, $\hat{\ibF}$ is reconstructed by enforcing the singularity constraint. 
The following are two observations regarding this estimation process:
%\vspace{-5pt}
\begin{enumerate}[1)]
	\item 
	The goal of Hartley's normalization is to achieve a better computation of $\hat{\ibf}$. However, this does not guarantee the singularity condition $\mathrm{det}(\mathrm{mat}(\hat{\ibf})) = 0$, which is why the singularity constraint enforcement (SCE) is necessary.
%	\vspace{-1pt}
	\item 
	There are cases where enforcing the singularity ($ \hat{\ibF} = \hat{\ibU}\mathrm{diag}(r_1,r_2,0)\hat{\ibV}^T$) brings about large nonlinear projection errors and leads to an unsatisfactory estimation of $\hat{\ibF}$. This happens especially when $\rho=r_2/r_3$ is not large enough. % where $\{r_1, r_2, r_3\}$ are the three singular values of $\hat{\ibF}$ in descending order.
\end{enumerate}
%\vspace{-5pt}
It is evident that the singularity constraint should be considered at the same time as well as the numerical conditioning when the normalization matrices $\ibT$ and $\ibT'$ are prepared, which implies the existence of better normalization schemes.

Our approach adopts a CNN-based model and a self-supervised learning algorithm to train it. The model outputs the parameters of the normalization matrices when eight input correspondences are provided as input. %; its architecture and performance are shown in the sequel.
Following the conjecture of the affine structure of the normalization matrix proposed in Ref. \cite{Muhlich_TLS_ECCV_1998}, the normalization matrix is designed here to have two more parameters than Hartley's normalization: %two scaling parameters $\alpha_1$ and $\alpha_2$, a rotation $\theta$, and translation:
% \vspace{-1.0mm}
\begin{equation}
\ibT_L =
\begin{bmatrix} \alpha_1 & \\ & \alpha_2 & \\ & & 1 \end{bmatrix}
\begin{bmatrix} \cos\theta & -\sin\theta & \\ \sin\theta & \cos\theta & \\ & & 1
\end{bmatrix}
\begin{bmatrix}
1 & & -o_1 \\ & 1 & -o_2 \\ & & 1
\end{bmatrix},
\end{equation}
% \vspace{-1.0mm}
which can characterize the data distribution better and enable more general normalization schemes to be implemented by CNNs. Nevertheless, how to robustly determine three normalization parameters (especially $\alpha_1$, $\alpha_2$, and $\theta$) has always been a difficult problem.
Note that, after Hartley's seminal solution \cite{Hartley_Normalization_TPAMI_1997}, there has been no substantial progress in designing hand-crafted normalization strategies.
%\cite{Muhlich_Subspace_SCIA_2001,Muhlich_TLS_ECCV_1998}.
In contrast, we try to extend Hartley's normalization and develop a deep solution for normalization.
The performance of the CNN model for this parametrization is evaluated and visualized through various experiments in Section \ref{sec:experiments}.
The overall computation pipeline of our framework is illustrated in Fig.~\ref{fig:fig2}.
\begin{figure}[htbp]
	\centering
	\includegraphics[width=0.47867\textwidth]{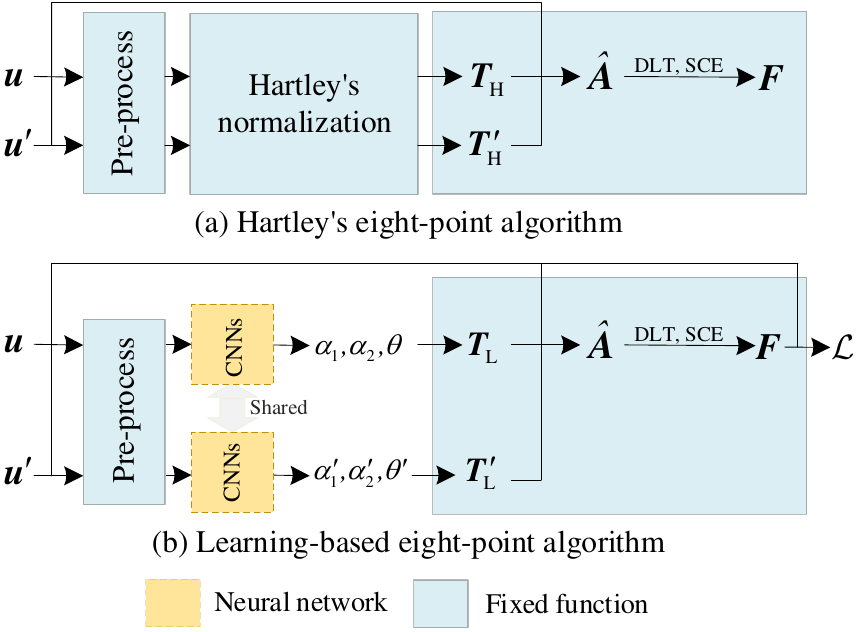}
	\caption{Overall framework comparisons of Harley's eight-point algorithm and our learning-based eight-point algorithm, both of which support eight points as input. Our approach shows an interpretable pipeline to predict the parameters of each normalization matrix ($\alpha_1$, $\alpha_2$, and $\theta$ in particular), which is also beneficial for a more accurate estimation of the intrinsic epipolar geometry. DLT refers to the direct linear transformation and SCE refers to the singularity constraint enforcement.
	\label{fig:fig2} }
        \vspace{-4.0mm}
	% 	Our approach shows an interpretable pipeline to estimate the fundamental matrix.\label{fig:fig2} }
\end{figure}

\subsection{Self-supervised learning for normalization}

\begin{figure*}[!t]
	\centering
	\includegraphics[width=0.833\textwidth]{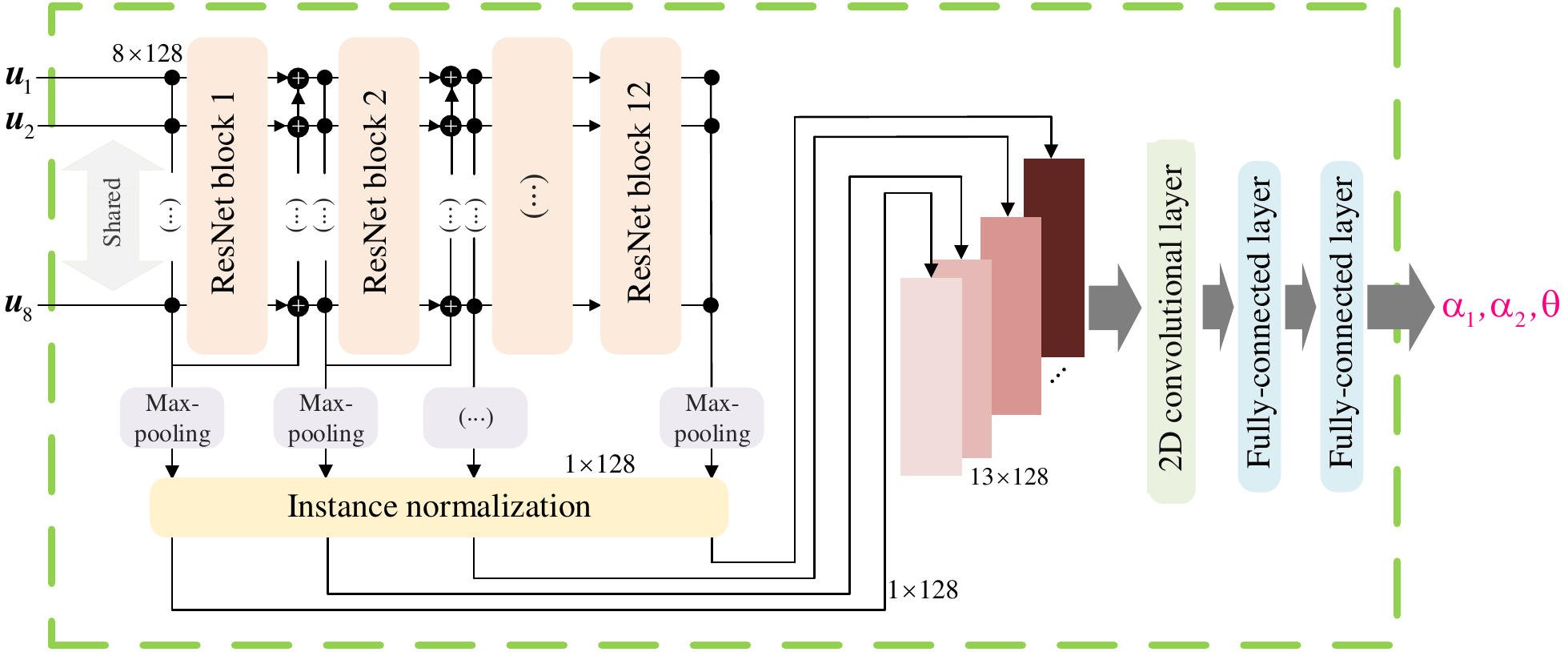}
	\caption{Overview of our network architecture, corresponding to the CNN layer in Fig.~\ref{fig:fig2}. Our approach estimates the parameters of the normalization matrix ($\alpha_1$, $\alpha_2$, and $\theta$ in particular). 2D convolutional layer refers to two-dimensional convolutional layer.
	\label{fig:fig3}}
 % \vspace{-2.0mm}
\end{figure*}

\noindent\textbf{Network architecture.}
%Our backbone network is inspired by 3DRegNet \cite{Pais_3DRegNet_arxiv_2019}.
%
The overall network architecture is illustrated in Fig.~\ref{fig:fig3}. 
We adopt the structure of the 12 consecutive ResNet blocks as the first stage of the CNN network, which is consistent with the classic two-view geometry estimation networks \cite{Yi_LearningCorrespondences_CVPR_2018,Ranftl_DeepFundamental_ECCV_2018}.
The eight input points $\textit{\textbf{u}}'$ or $\textit{\textbf{u}}$ are first processed by multi-layer perceptrons of 128 neurons sharing weights \cite{Yi_LearningCorrespondences_CVPR_2018} between correspondences. Then, the 128-dimensional features for each correspondence are transmitted as output through 12-layer ResNet blocks \cite{He_ResNet_CVPR_2016,Yi_LearningCorrespondences_CVPR_2018}. The integration of global information is performed by weight-sharing operations between different correspondences, followed by instance normalization \cite{Ulyanov_TextureNetwork_CVPR_2017} after each layer. 
Max-pooling and instance normalization are applied to each layer of the 12-layer ResNet blocks, namely, the input of the first ResNet block and the output of each of the next 12 ResNet blocks, to extract 13 global features of $1\times128$ dimensions, respectively. This process enables the CNN layer to maintain the permutation invariance and fix the size of the global feature maps. Then, 13 feature maps are concatenated and delivered to the two-dimensional (2D) convolutional layer, which consists of eight channels, $3\times3$ square kernels, and unequal strides with four in the column and one in the row. The output of the 2D convolution is then passed through two fully-connected layers each with a dimension of 256, followed by ReLU. Finally, three-parameter estimation corresponding to $\textit{\textbf{u}}'$ or $\textit{\textbf{u}}$ is regressed.
Note that our network supports the input of more than eight correspondences and this flexibility is mainly due to the max-pooling and instance normalization design, which is valuable in practice.

Our network is inspired by 3DRegNet \cite{Pais_3DRegNet_arxiv_2019} but has significant differences in architecture design: we utilize weight sharing for point correspondences, instance normalization module for better performance, and fewer parameters in 2D convolution.
Specifically, compared to the representative two-view geometry estimation methods \cite{Yi_LearningCorrespondences_CVPR_2018,Ranftl_DeepFundamental_ECCV_2018}, our network is invariant to the permutation of the correspondences.

%which is essential for geometry computation.

\vspace{0.5mm}
\noindent\textbf{Self-supervised learning.}
In order to train our model through self-supervised learning, the outputs obtained from the CNN model are leveraged to construct the normalization matrices $\ibT$ and $\ibT'$, and are fed into the next module performing 1) the data scaling, 2) DLT to compute $\hat{\ibf}$, and 3) SVD to compute singularity constrained $\hat{\ibF}$.
Finally, the output $\ibF$ is evaluated using the loss function chosen to be the symmetry epipolar distance \cite{Hartley_MVG_2003}:
\begin{equation}\label{eq:7}
\small
\mathcal{L} ( {\textit{\textbf{F}}} ; \ibu_i, \ibu_i') 
\!=\! {\left| {\textit{\textbf{u}}'_i{^T}} \textit{\textbf{F}} \textit{\textbf{u}}_i \right|}{\left( {\frac{1}{{\left\| ({\textit{\textbf{F}}{^T} \textit{\textbf{u}}'_i})^{\rm{(1:2)}} \right\|_2}} \!+\! \frac{1}{{\left\| ({\textit{\textbf{F}} \textit{\textbf{u}}_i})^{\rm{(1:2)}} \right\|_2}}} \right)}.
\end{equation}
We tested several variants of distance functions including the Sampson distance and algebraic distance, and decided to use the symmetry epipolar distance, because it showed superior results in the experiments. Interestingly, these findings contrast with the findings of Ref. \cite{Hartley_MVG_2003}.

By training through minimizing the loss function, we can train the network without any ground truth data at all, contrary to Ref. \cite{Pais_3DRegNet_arxiv_2019} or Ref. \cite{Ranftl_DeepFundamental_ECCV_2018}; the network achieves self-supervisory in the geometric sense.
It also enables us to exploit a very large number of frames from video sequence datasets under various kinds of camera motion.

% ------------------------
\vspace{0.5mm}
\noindent\textbf{Addressing the ordering invariance.}
Our network model is designed to be invariant to the order of the input image points similar to Ref. \cite{Qi_PointNet_CVPR_2017} or Ref. \cite{Pais_3DRegNet_arxiv_2019}, thereby obtaining invariance in the subsequent fundamental matrix computation.  
\begin{proposition}\label{p2}
%namely, they are not affected by the order of the input data $\textit{\textbf{u}}'$ and $\textit{\textbf{u}}$, 	
 As long as the computation of the normalization matrices $\textit{\textbf{T}}'$ and $\textit{\textbf{T}}$ has permutation invariance, then so has the computation of the fundamental matrix. %${{\textit{\textbf{F}}}}$ based on the normalized 8-pt method will be independent of the order of the input data.
\end{proposition}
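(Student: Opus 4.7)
The plan is to trace the permutation through each of the three stages of the normalized eight-point algorithm (normalization, DLT with singularity constraint enforcement, denormalization) and show that invariance of $\ibT$ and $\ibT'$ propagates all the way to $\ibF$. Let $\ibP$ denote any $8\times 8$ permutation matrix acting on the ordering of the correspondences $\{(\ibu'_i,\ibu_i)\}_{i=1}^{8}$.

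First I would observe that if $\ibT$ and $\ibT'$ are permutation invariant, then the multiset of normalized correspondences $\{(\hat{\ibu}'_i,\hat{\ibu}_i)\}$ is identical before and after reordering; only the row index of each pair in the coefficient matrix changes. Consequently, the transformed coefficient matrix $\hat{\ibA}_{\ibP}$ built from the permuted data is exactly $\ibP\,\hat{\ibA}$, where $\hat{\ibA}$ is the matrix defined in Eq.~\eqref{eq:4}.

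The crucial step is the DLT stage. Here I would use the fact that $\ibP$ is orthogonal, so $(\ibP\hat{\ibA})^T(\ibP\hat{\ibA})=\hat{\ibA}^T\ibP^T\ibP\hat{\ibA}=\hat{\ibA}^T\hat{\ibA}$. The right singular vectors of $\hat{\ibA}$ are the eigenvectors of $\hat{\ibA}^T\hat{\ibA}$, hence the unit right singular vector $\hat{\ibf}$ associated with the smallest singular value is unchanged by the permutation (up to the usual unavoidable sign, which is harmless because $\ibF$ is only defined up to scale). Reshaping gives the same $\hat{\ibF}=\mathrm{mat}(\hat{\ibf})$, and since the subsequent SCE is a deterministic function of $\hat{\ibF}$ (take its SVD and zero the smallest singular value), it also yields the same $\hat{\ibF}'$.

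Finally, the denormalization $\ibF=\ibT'^{T}\hat{\ibF}'\ibT$ depends only on $\ibT,\ibT'$ (invariant by hypothesis) and $\hat{\ibF}'$ (invariant by the previous step), so the recovered fundamental matrix is permutation invariant, completing the argument. The only subtle point I expect to have to address carefully is the sign/scale ambiguity of the smallest right singular vector, and the (measure-zero) case where the two smallest singular values of $\hat{\ibA}$ coincide; in the first case the ambiguity is absorbed into the global scale of $\ibF$, and in the second case one only obtains invariance of the two-dimensional null subspace, which is the standard caveat already inherent in the eight-point DLT itself and not introduced by permuting the inputs.
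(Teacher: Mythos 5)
Your proposal is correct and follows essentially the same route as the paper's proof: reduce the reordering to a row permutation of $\hat{\ibA}$, observe that this leaves the right singular vector of the smallest singular value (and hence $\hat{\ibF}$, its SCE projection, and the denormalized $\ibF$) unchanged. Your version is in fact slightly more careful than the paper's, since you justify the invariance via the orthogonality of the permutation matrix acting on $\hat{\ibA}^{T}\hat{\ibA}$ and explicitly flag the sign ambiguity and the degenerate case of coinciding smallest singular values, whereas the paper appeals loosely to an unspecified ``row transformation'' property.
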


\begin{proof}
	Because $\textit{\textbf{T}}'$ and $\textit{\textbf{T}}$ maintain invariant for any order of the input data $\textit{\textbf{u}}'$ and $\textit{\textbf{u}}$, the resulting $\hat{\textit{\textbf{u}}}'$ and $\hat{\textit{\textbf{u}}}$ hold the same order as $\textit{\textbf{u}}'$ and $\textit{\textbf{u}}$ after normalization; this is equivalent to performing a row transformation on the transformed coefficient matrix $\hat{{\textit{\textbf{A}}}}$ in Eq.~\eqref{eq:4} for different orders of $\textit{\textbf{u}}'$ and $\textit{\textbf{u}}$.
	However, when the row transformation is made to $\hat{{\textit{\textbf{A}}}}$, the right singular vector corresponding to the smallest singular value of $\hat{{\textit{\textbf{A}}}}$ does not change \cite{Horn_MatrixAnalysis_2012}, {i.e.} the estimation of $\hat{{\textit{\textbf{F}}}}$ is not affected. Furthermore, the final fundamental matrix ${{\textit{\textbf{F}}}}$ also has permutation invariance.
\end{proof}

%As a contrast to the hand-crafted normalization matrices \cite{Hartley_Normalization_TPAMI_1997}, we propose an alternative thinking that: \emph{Whether we can learn the normalization matrices with a deep neural network?} 

% Thus, a deep solution of normalization matrices is proposed in this paper, where the algebraic equation is solved explicitly and the geometric error is taken as the self-supervised signal, which is able to establish two independent simplified three-parameter estimation problems for the correspondences of the left and right views according to Eq.~\eqref{eq:8}.
% The schematic overview of the overall framework of the learned 8-pt method is illustrated in Fig.~\ref{fig:fig2} (b). The image correspondences $\textit{\textbf{u}}'$ and $\textit{\textbf{u}}$ are taken as two independent input samples and pre-processed to normalize to $[-1, 1]$ to make the optimization numerically more stable \cite{Ranftl_DeepFundamental_ECCV_2018}, then the corresponding three-parameter estimations are obtained by CNNs layer respectively. Note that $\textit{\textbf{u}}'$ and $\textit{\textbf{u}}$ share the same network weights and architectures. Then, the normalization matrices $\textit{\textbf{T}}'$ and $\textit{\textbf{T}}$ are derived directly according to Eq.~\eqref{eq:8}, which transforms the image coordinates to perform the DLT and the SCE for transformed coefficient matrix $\hat{\textit{\textbf{A}}}$. Finally, the fundamental matrix ${\textit{\textbf{F}}}$ is regressed based on the geometric residuals. 

\vspace{0.5mm}
\noindent\textbf{Training procedure.}
The network is implemented in PyTorch. We adopt the Adamax Optimizer \cite{Kingma_Adam_ICLR_2015} with an initial learning rate of $10^{-3}$ and a decreasing learning rate of 0.8 times per 10 epochs. The chosen batch size is 16 and the network is trained for 150 epochs. Each input set is pre-filtered by the residual based on the original {eight-point algorithm} with a threshold (60 pixels) sufficiently large to enhance the stability of the training process. % by removing datasets of apparent outliers.

%=====================================================================
\section{Experimental results} \label{sec:experiments}
To prove that our approach can learn normalization matrices adapted to the input data and obtain more accurate fundamental matrix estimations, we benchmark the performance of our approach on three typical datasets with varying regularity. Furthermore, we perform cross-dataset validation to prove the generalizability of our approach. 

\subsection{Datasets}

\noindent{\textbf{KITTI dataset.}} The KITTI odometry dataset \cite{Geiger_KITTI_CVPR_2012} consists of 22 distinct sequences from a car driving around a residential area. This dataset exhibits dominant forward motion with high regularity but shows difficult data associations. We choose the first 11 sequences with ground truth from GPS and a Velodyne LiDAR. Specifically, we employ sequences ``{00}'' to ``{05}'' for training and sequences ``{06}'' to ``{10}'' for testing in our experiment, which enables a fair comparison with recent state-of-the-art methods \cite{Ranftl_DeepFundamental_ECCV_2018}.

\noindent{\textbf{TUM dataset.}} We use the indoor sequences from the TUM RGB-D dataset \cite{Sturm_RGBD_IROS_2012}, which contains several hand-held sequences with ground truth obtained by an additional motion capture system. This dataset reflects rich camera motion and scene geometry, and shows the most general cases for fundamental matrix estimation. 
We exploit the cross-validation for the sequence ``{fr3\_long\_office}'' during training. 
%As our network performs better on the image size of the KITTI odometry dataset, 
To better test the generalizability of the proposed method,
we resize the image size of the TUM RGB-D dataset to be consistent with that of the KITTI dataset.
%, and we call it the TUM dataset.

\noindent{\textbf{Cambridge dataset.}} The Cambridge dataset \cite{Kendall_PoseNet_ICCV_2015} is a large-scale outdoor urban localization setting, containing six challenging scenes with changes in perspective and illumination; this setting is quite different from TUM and KITTI datasets. Here we adopt the ``{St Mary's Church}'' scene to evaluate the generalization ability of our proposed approach, and report only the qualitative results in the following section.

We generate two different correspondence datasets for each of the KITTI dataset and the TUM dataset, which are stored in a manner similar to that used in Ref. \cite{Menze_SceneFlow_CVPR_2015}.
First, 1000 correspondences based on SIFT \cite{Lowe_SIFT_IJCV_2004} are pre-filtered by employing a ratio test with a threshold of 0.8.
% then an average of about 432 correspondences for per image in KITTI dataset is obtained.
The second one does not leverage the ratio test to pre-filter the correspondences, which generates a challenging dataset with high noise.
The ratio test is a frequently used strategy for improving the robustness and accuracy of feature matching. 
Therefore, unless otherwise stated, we utilize pre-filtered datasets in our experiments.
Moreover, each input sample is generated by shuffling all the correspondences between two views in the dataset.

\subsection{Evaluation protocols}
To evaluate the performance of our approach, we report the average better rate of per input sample, {i.e.}, the average percentage that our learning-based normalization outperforms Hartley's normalization in terms of the symmetric epipolar distance (see Eq.~\eqref{eq:7}).
Besides, in the experiments within the RANSAC framework, we evaluate the average percentage of inliers (correspondences with errors less than 1 pixel or 0.1 pixels),
as well as the \textit{F}1 (the average percentage of correspondences below 1 pixel error with respect to the ground truth epipolar line).%in all inliers

%     Note that the homogeneous factor $\lambda$ in Eq.~\eqref{eq:6} was usually expressed as $\left\| {\textit{\textbf{F}}} \right\|_{\rm{F}}$ or ${\mathop {{\rm{max}}}\limits_{i,j} \left| {{{\textit{\textbf{F}}}_{i,j}}} \right|}$ \cite{Omid_DeepFundamental_wo_corresponences_ECCV_2018}, where the former is the Frobenius norm of ${\textit{\textbf{F}}}$ and the latter is the maximum absolute value of all entries of ${\textit{\textbf{F}}}$. 
%    But for a more intuitive comparison, we set $\lambda=10^{-3}$ in our experiment.

%\begin{figure}[!t]
%	\centering
%	\includegraphics[width=0.32\textwidth]{Myfigfile/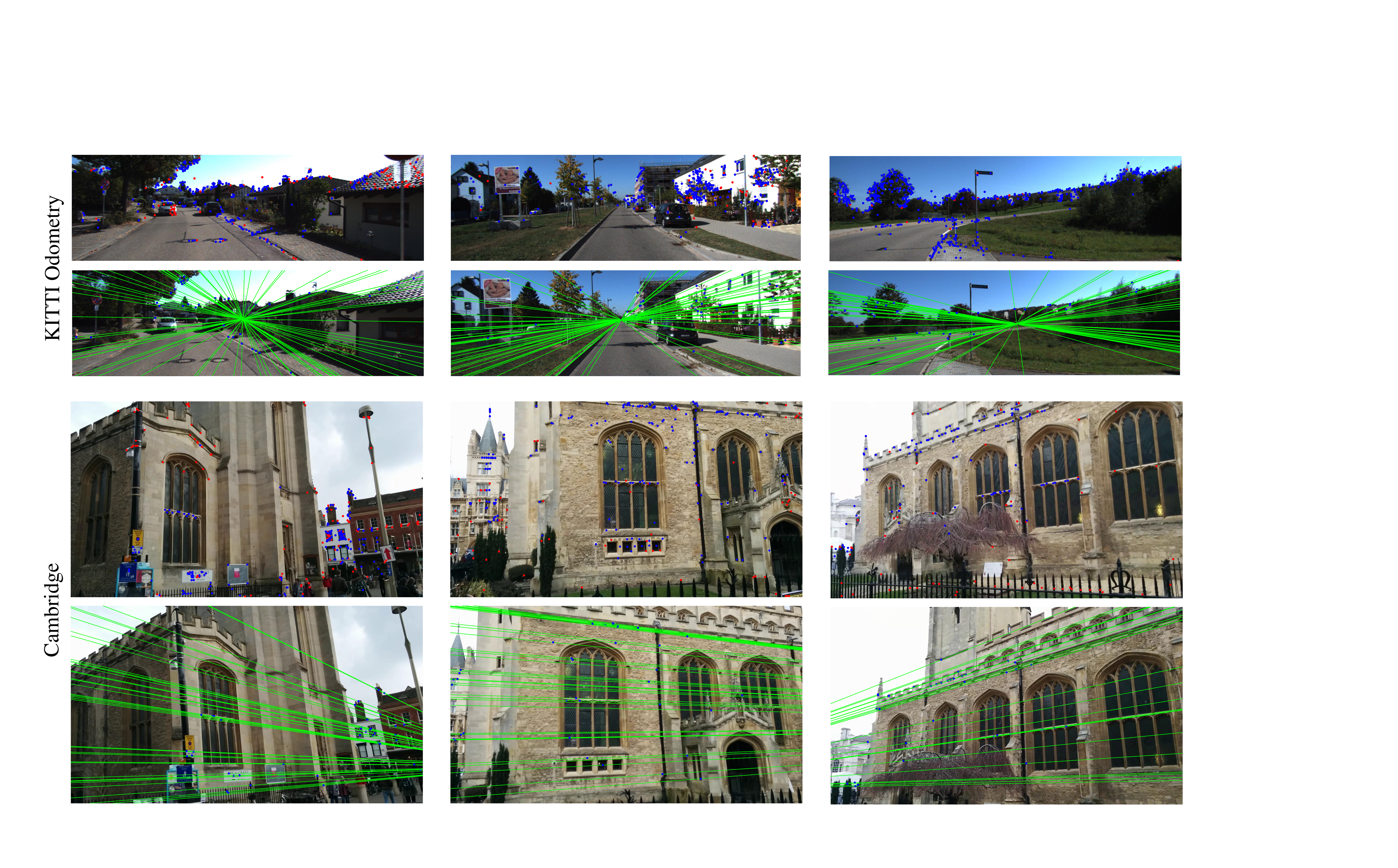}%0.43
%	\caption{Average error of each input sample within different 8-pt methods. We discard the input samples with original 8-pt errors greater than 60 for better visualization.\label{fig:fig6}}
%\end{figure}

\begin{figure*}[!t]
	\begin{minipage}[b]{.5\linewidth}
		\centering
		\includegraphics[width=0.913\textwidth]{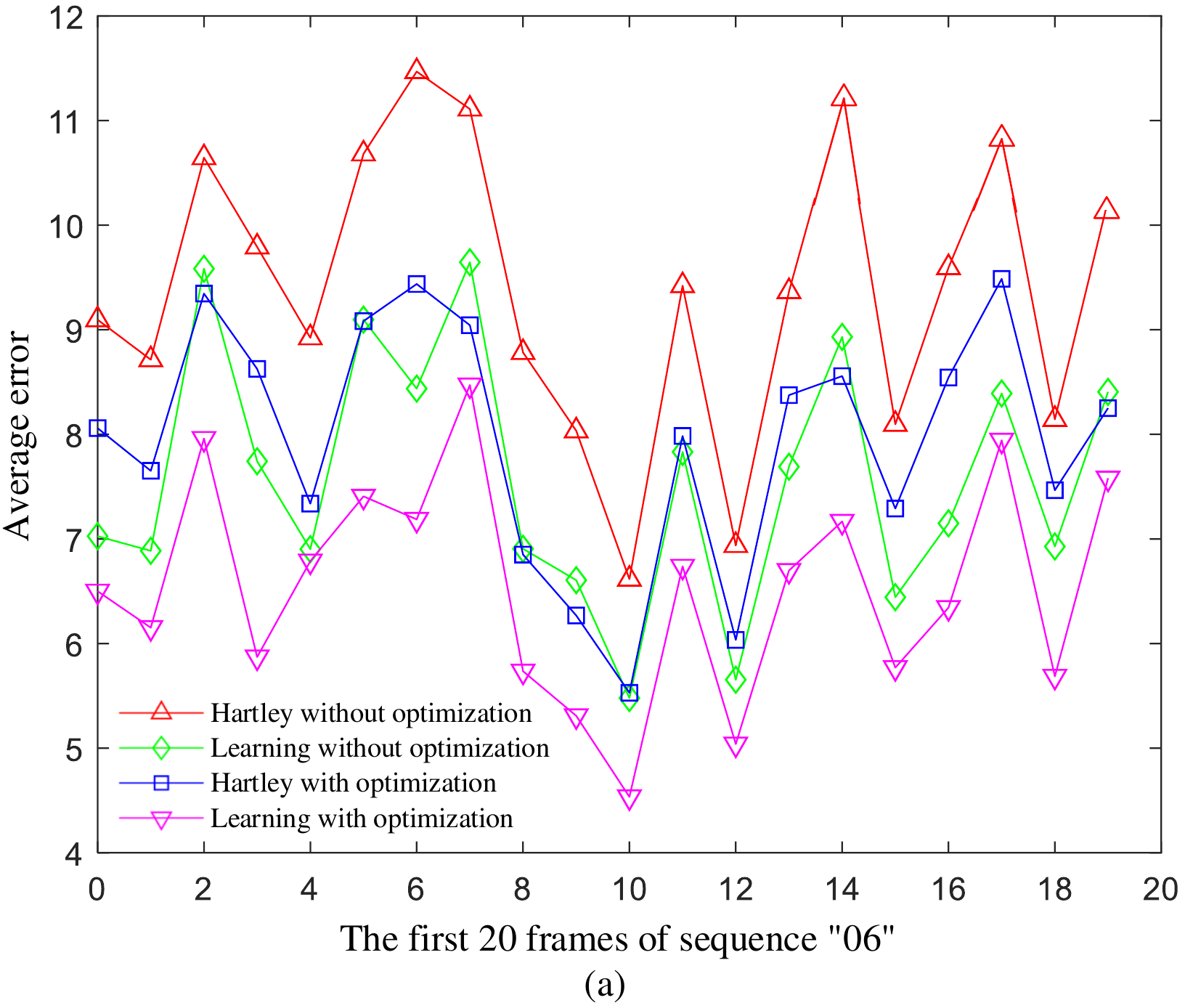}
		% \caption{(a)}	
	\end{minipage}
	\begin{minipage}[b]{.5\linewidth}
		\centering
		\includegraphics[width=0.9389\textwidth]{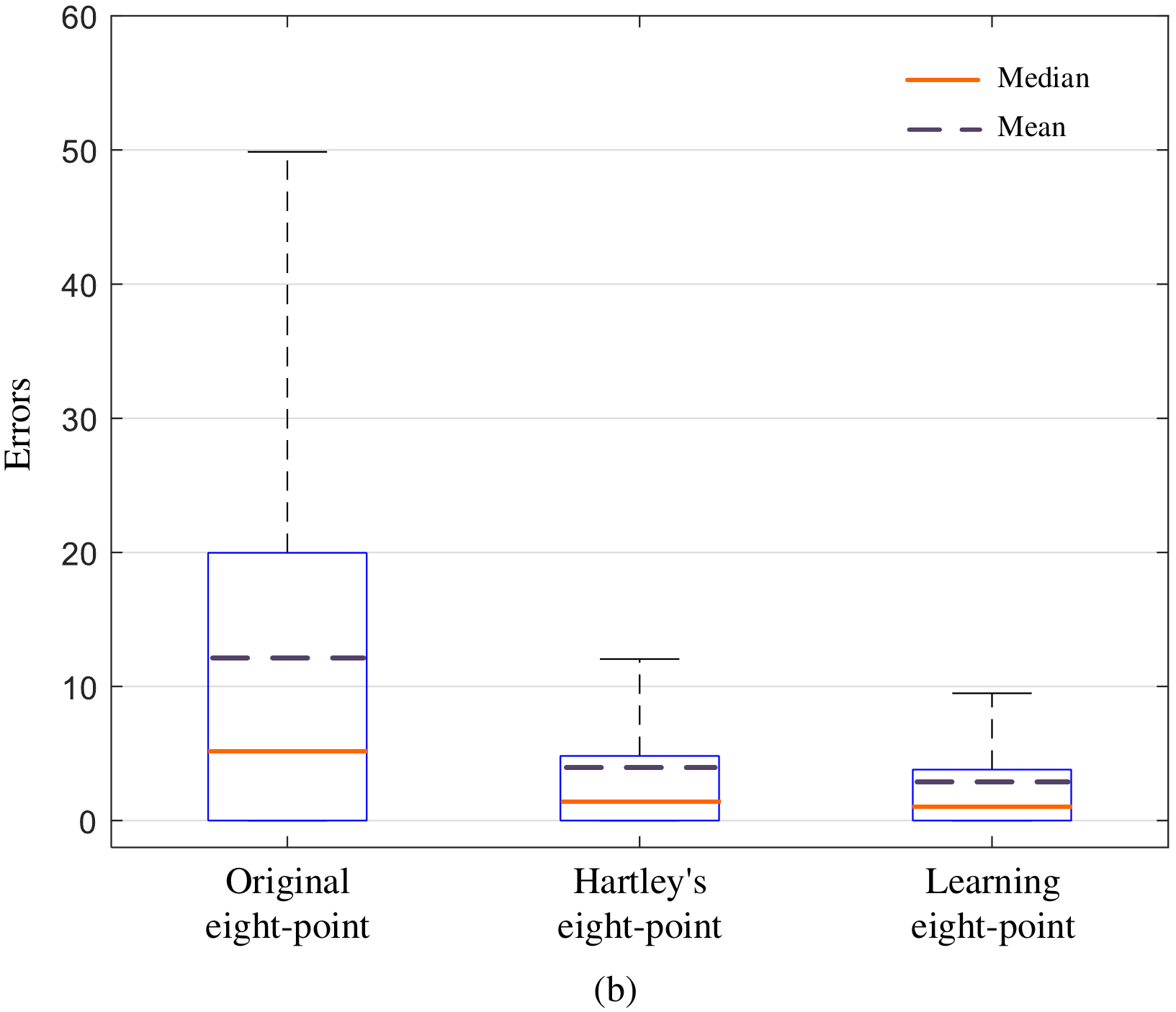}%0.43
		% \caption{(b)}
	\end{minipage}%
	\caption{(a) Average pixel errors of per sample with or without optimization of the first 20 frames of sequence ``{06}''. Our direct results are almost the same as those based on Hartley with optimization. (b) Average pixel error of each sample for the different eight-point methods. We discard the input samples with an original eight-point error greater than 60 for better visualization. \label{fig:fig4}}
\end{figure*}

\begin{figure*}[!t]
	\centering
	\includegraphics[width=0.723\textwidth]{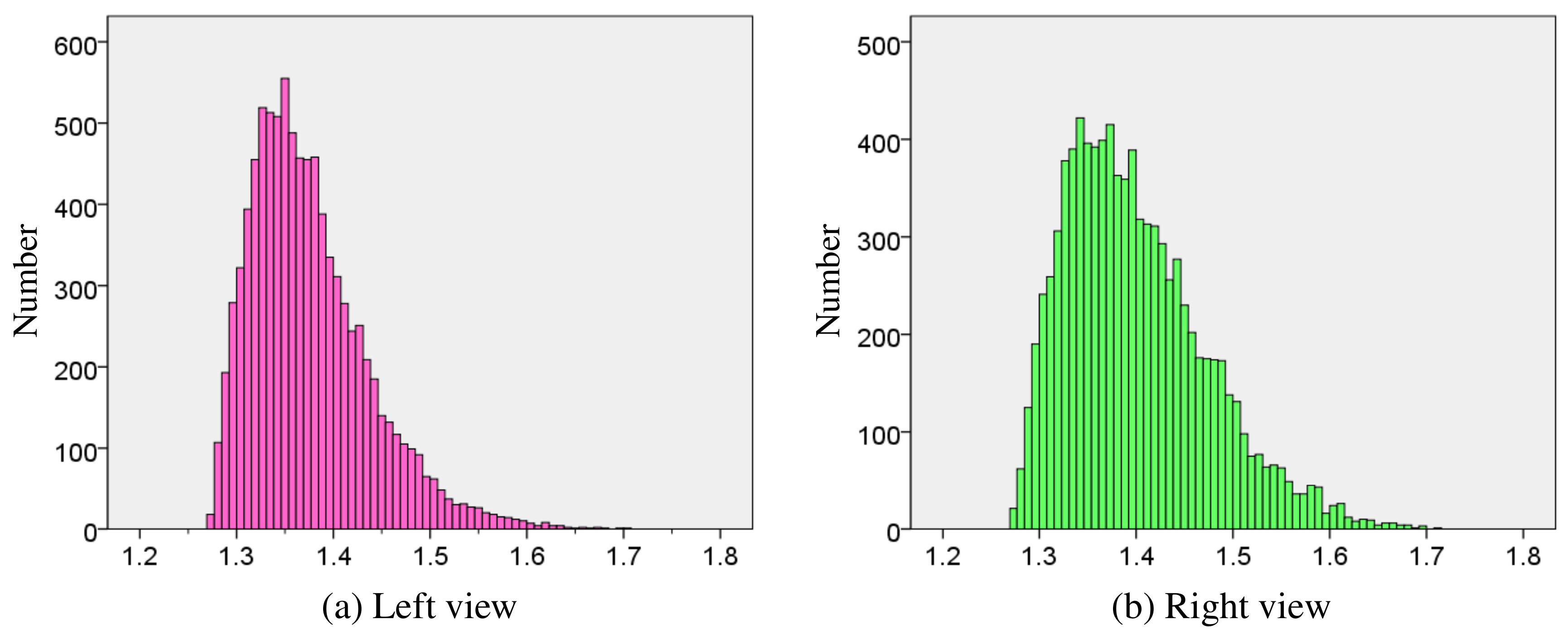}
	\caption{Learning-based normalized distances from the origin on the left and right camera views, respectively. Hartley's normalization makes them always $\sqrt{2}$, while our approach learns a robust normalization scheme adapted to the input data.\label{fig:fig5}}
    \vspace{-2mm}
\end{figure*}

\subsection{Experimental evaluations}
In the first experiment, we evaluate the performance of our approach on per input sample. 
% stored in a manner similar to \cite{Menze_SceneFlow_CVPR_2015}.
%The average error is calculated by uniformly sampling every 200 consecutive frames of the %KITTI odometry dataset. The results are summarized in Fig.~\ref{fig:fig4}.
%It can be observed that our approach has learned a better normalization strategy between %Hartley normalization based algebraic and geometric errors.
We first optimize ${\textit{\textbf{F}}}$ based on Eq.~\eqref{eq:7} under singularity constraints for Hartley's normalization and our learning-based normalization in the KITTI testing set, and the results are summarized in Fig.~\ref{fig:fig4} (a). The equivalence between our approach and Hartley-based optimization result is reported, which indicates that our approach can provide better initial values for more sophisticated nonlinear optimization methods.
Unlike the constant $\sqrt{2}$ distance from the origin in Hartley's normalization, Fig.~\ref{fig:fig5} shows that our learning-based normalization predicts a distance tailored to each input data, which exploits the inherent regularity of the input data.

\begin{table}[!t]
	\caption{Results of the average improvement rate of per input sample in diverse training sets. 
		% 	We train models based on the training set of the KITTI and TUM datasets, and show that 
		Our approach not only takes into account the inherent regularity of the input data but also learns a better and more generalized normalization scheme.}\label{t1}
	\centering
	\resizebox{1.0\linewidth}{!}{
		\begin{tabular}{lcc}
			\hline
			& {@KITTI} & {@TUM} \\
			& Better rate (\%)       & Better rate (\%)  \\ \cline{2-3}
			Train on KITTI & 86.68            & 90.04       \\
			Train on TUM   & 84.22            & 89.73       \\ 
			Train on KITTI \& TUM & 87.05            & 91.43       \\ \hline
	\end{tabular}}
\end{table}

Then, we quantitatively evaluate the average improvement rate of per each input sample, which is our primary concern.
% which is what we care about most.
% which is also the most concerned point of this paper. 
Since Hartley's normalization is the most widely-used normalization method \cite{Hartley_MVG_2003}, we only compare with it here.
As presented in Table~\ref{t1}, our learning-based normalization outperforms Hartley's normalization for each input sample. Interestingly, the model trained on the KITTI dataset is generalizable well to the TUM dataset, and vice versa, which shows the great generalizability of our approach.
To further analyze the impact of training sets on our approach, we provide experimental results by evaluating the average percentage of each input sample when using KITTI and TUM datasets jointly as training sets. The performance of our approach is further improved for each input sample, which shows that our approach can learn a better and more generalized normalization scheme from more training data that contains diverse regularities.
Finally, in Fig.~\ref{fig:fig4} (b), we report the distribution of the symmetric epipolar distance for the original eight-point algorithm, with Hartley's normalization, and with our learning-based normalization. While both have achieved great improvements with respect to the un-normalization version, our learning-based normalization consistently outperforms Hartley's normalization in achieving lower errors for eight input correspondences.

\begin{table}[!t]
	\centering
	\caption{Results on the KITTI testing set with the ratio test under different inlier thresholds. }\label{t2}
	\resizebox{0.90\linewidth}{!}{
		\begin{tabular}{lcccc}
			\hline
			\textit{} & \multicolumn{2}{c}{{@0.1px}} & \multicolumn{2}{c}{{@1px}}        \\ \cline{2-5}
			& Inliers (\%)        & \textit{F}1    & Inliers (\%)   & \textit{F}1   \\ \hline
			Ranftl's \cite{Ranftl_DeepFundamental_ECCV_2018}    & 24.61          & 14.65      & 85.87         & 75.77       \\
			MLESAC \cite{Torr_MLESAC_CVIU_2000}    & 18.60          & 12.54      & 84.48         & 75.15        \\
			LMEDS \cite{Rousseeuw_LMSR_JASA_1984}     & 20.01          & 13.34      & 84.23         & 75.44        \\
			USAC \cite{Raguram_USAC_PAMI_2013}      & 21.43          & 13.90      & 85.13         & 75.70        \\
			RANSAC \cite{Fischler_RANSAC_ACM_1981}    & 21.85          & 13.84      & 84.96         & 75.65        \\
			Ours      & 21.89          & 13.86      & 84.98         & 75.66        \\
			\hline
	\end{tabular}}
\end{table}

From the superior performance of our learning-based normalization algorithm over each input sample, we further heuristically verify that our approach can be effectively integrated into the traditional RANSAC framework \cite{Fischler_RANSAC_ACM_1981}.
% the effectiveness of our approach within the RANSAC framework \cite{Fischler_RANSAC_ACM_1981}.
%instead of designing an ensemble network to improve overall performance as in DSAC \cite{Brachmann_DSAC_CVPR_2017}.
%
%Resembling the evaluation manners in \cite{Ranftl_DeepFundamental_ECCV_2018}, we 
In the experimental comparison, we follow the most related and classic work \cite{Ranftl_DeepFundamental_ECCV_2018}. We compare our approach with the least median of squares (LMEDS) \cite{Rousseeuw_LMSR_JASA_1984}, MLESAC \cite{Torr_MLESAC_CVIU_2000}, USAC \cite{Raguram_USAC_PAMI_2013}, Ranftl's method \cite{Ranftl_DeepFundamental_ECCV_2018} and RANSAC \cite{Fischler_RANSAC_ACM_1981}, where RANSAC is based on Hartley's normalization while our approach is performed with the learning-based normalization. 
Note that USAC is a state-of-the-art robust estimation framework, and ``RANSAC + normalized eight-point algorithm'' represents the gold standard \cite{Hartley_MVG_2003} for geometric tasks such as visual odometry and SLAM.
Inside Ranftl's method \cite{Ranftl_DeepFundamental_ECCV_2018}, the matching scores have been used as additional information to guide the estimation, which can result in an obvious improvement in average accuracy. 
By contrast, we leverage only the original RANSAC to conduct experiments for performance evaluation.
It is also worth noting that as a supervised learning-based framework, Ranftl's method requires ground truth correspondences in training, while our approach is fully self-supervised.
Additionally, designing an ensemble network to improve overall performance such as DSAC \cite{Brachmann_DSAC_CVPR_2017} is outside the scope of this paper, as our focus is better normalization for each sample. 
%%%  {\color{red} Note that improved performance in eight correspondences does not necessarily lead to higher interior voting when combined with RANSAC.}

Table~\ref{t2} summarizes the results on the KITTI dataset. 
Within the RANSAC framework, our learning-based normalization performs on par with Hartley's normalization on the KITTI benchmark.
Furthermore, we evaluate the performance based on the challenging testing set without the ratio test, and the results are presented in Table~\ref{t3}.
Note that our approach achieves higher inliers on the TUM dataset.
% In fact, 
We remark here that, recent analyses in Refs. \cite{chin2018robust,chin2020quantum} as well as related experiments in Ref. \cite{ding2020minimal} indicate that the RANSAC paradigms with supporting heuristics can only increase the chance of finding the final good solution and are not completely governed by the internal solver,
which is one possible reason for the slight improvement of our method when it is embedded into RANSAC.
% Therefore, we remark here that our method is comparable with the vanilla Hartley normalization when integrated with RANSAC.
%
% Interestingly, we also observe a similar phenomenon using the data in Fig.~\ref{fig:fig4} (left), when embedding the Hartley method with or without optimization into RANSAC respectively. In this case, the average inliers after optimization are slightly lower than that of the original Hartley method, although the optimization method always has a smaller error (better epipolar geometry) for each sample (see Fig.~\ref{fig:fig4} (left)).
%
Overall, the effectiveness of our learning-based normalization method combined with RANSAC is demonstrated.

% our approach can provide better initial values for nonlinear optimization and estimate better epipolar geometry for fewer correspondences and afford better interpretability for an ensemble network.

% Furthermore, we evaluate the performance based on the challenging testing set without the ratio test for inliers with thresholds of 0.1 pixels and 1 pixel, and the results are reported in Table~\ref{t3}.
% Our approach achieves higher inliers in the TUM dataset, which shows that our approach is competitive with the Hartley normalization when combined with RANSAC.

\begin{table}[!t]
	\centering
	\caption{Performance of the proposed method combined with RANSAC in the testing set without the ratio test.}\label{t3}
	\resizebox{0.75\linewidth}{!}{
		\begin{tabular}{cccc}
			\hline
			\textit{}              &                      & {@0.1px}                 & {@1px}                          \\
			\multicolumn{1}{l}{}   & \multicolumn{1}{l}{} & \multicolumn{1}{l}{Inliers (\%)} & \multicolumn{1}{l}{Inliers (\%)} \\ \hline
			\multirow{2}{*}{KITTI} & Hartley              & 4.40                           & 30.50                          \\
			& Learning             & 4.38                           & 30.55                          \\ \cline{2-4} 
			\multirow{2}{*}{TUM}   & Hartley              & 7.60                           & 44.13                          \\
			& Learning             & 7.64                           & 44.18                          \\ \hline
	\end{tabular}}
\end{table}

Finally, we directly employ the network model trained on the KITTI dataset, which is very different from the Cambridge dataset. The qualitative generalization results for the Cambridge dataset are reported in Fig.~\ref{fig:fig10}.
One can see that our approach can achieve an accurate two-view fundamental matrix estimation, which reflects the good generalization ability of our approach.
Moreover, since we always centralize the correspondences first, varying image sizes and distributions of features will not have a significant impact on the final results.
% Our approach employs CNN to predict the normalization parameters, which is implemented by GPU. 
Currently, our forward propagation time is approximately 5 times that of Hartley's normalization due to the use of 12-layer ResNet architectures.
Fortunately, these efficiency sacrifices can improve normalization to achieve more accurate epipolar geometry for each sample.

\begin{figure*}[!t]
	\centering
	\includegraphics[width=0.9765\textwidth]{}\vspace{-1.5mm}
	\caption{Image pairs from the KITTI and Cambridge datasets. Odd row: First image with inliers (blue) and outliers (red). Even row: The estimated epipolar lines of a random subset of inliers in the second image. The images are scaled for visualization.\label{fig:fig10}}
 \vspace{-3.7mm}
\end{figure*}

\vspace{0.5mm}
\noindent\textbf{Influence of the number of correspondences.} We perform additional experiments to analyze the influence of the number of correspondences in the input. We take the median of 1000 trials based on a random testing image. The results are shown in Fig.~\ref{fig:fig11_eccv}, which indicate that better fundamental matrices can be obtained with an increasing number of correspondences. 

\vspace{0.5mm}
\noindent\textbf{Condition numbers.} We conduct another experiment to compare the condition numbers in solving the fundamental matrix and the results are reported in Fig.~\ref{fig:fig12_eccv}. We observe that better numerical conditioning of the transformed coefficient matrix can be obtained by our learning-based normalization, which is one of the keys to our upgraded performance.

\vspace{0.5mm}
\noindent\textbf{Nonlinear projection.} The singularity of $\hat{\ibF}$ is evaluated by calculating $\rho=r_2/r_3$ for every 100 consecutive frames of the KITTI testing set. The results are displayed in Fig.~\ref{fig:fig13_eccv}, which shows our learning-based approach is able to achieve smaller nonlinear projection errors.
These findings also verify our argument that the condition number of the transformed coefficient matrix via a better normalization will be more conducive to imposing the singularity constraint on the resulting fundamental matrix.
Note that these experimental results all highlight the superiority of our learning-based normalization approach.

\begin{figure}[htbp]
	\centering
	\includegraphics[width=0.391\textwidth]{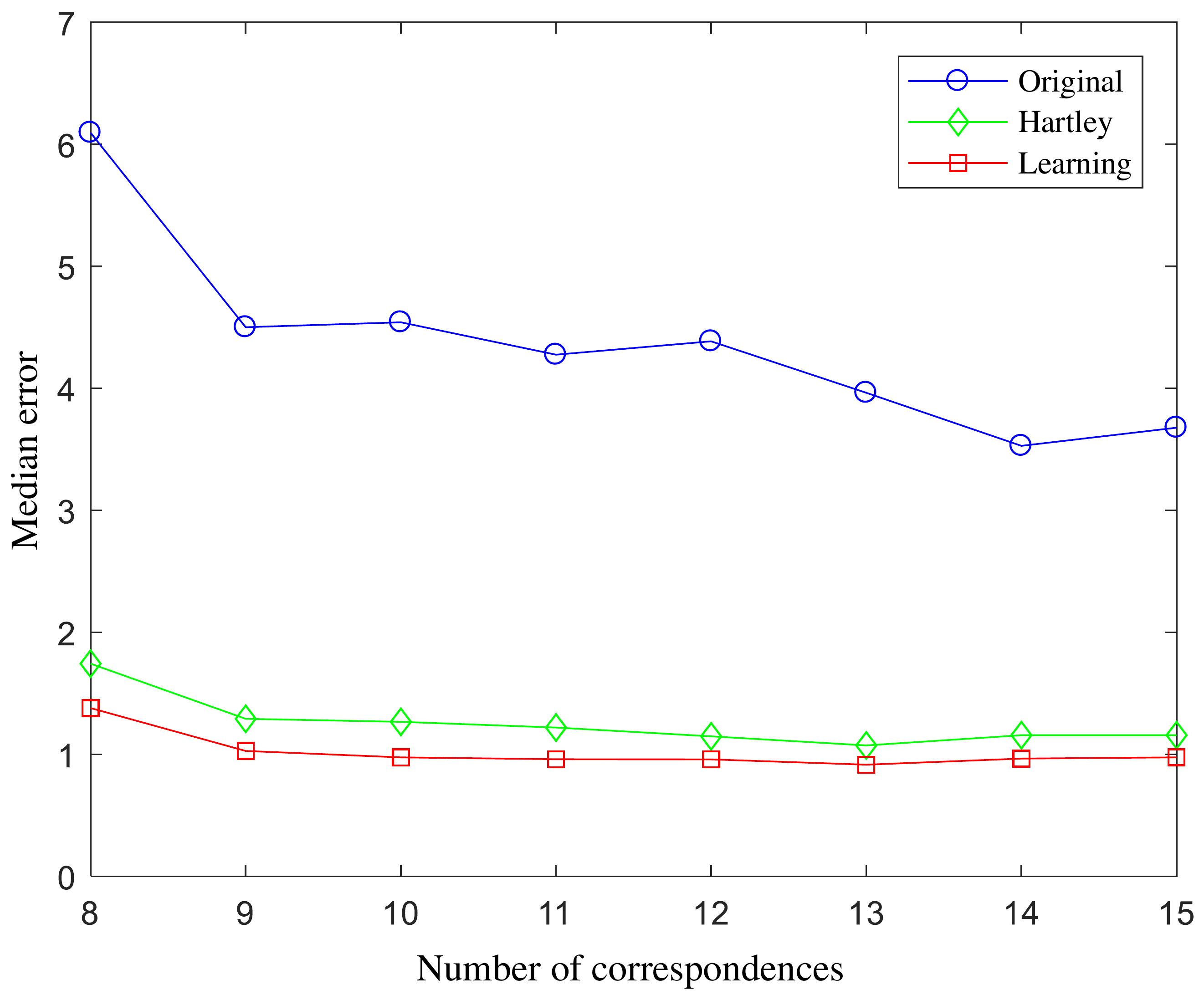}
	\caption{Influence of the number of correspondences on the median error of 1000 trials in a random testing frame. \label{fig:fig11_eccv}}
 \vspace{-2.5mm}
\end{figure}
\begin{figure}[htbp]
	\centering
	\includegraphics[width=0.414\textwidth]{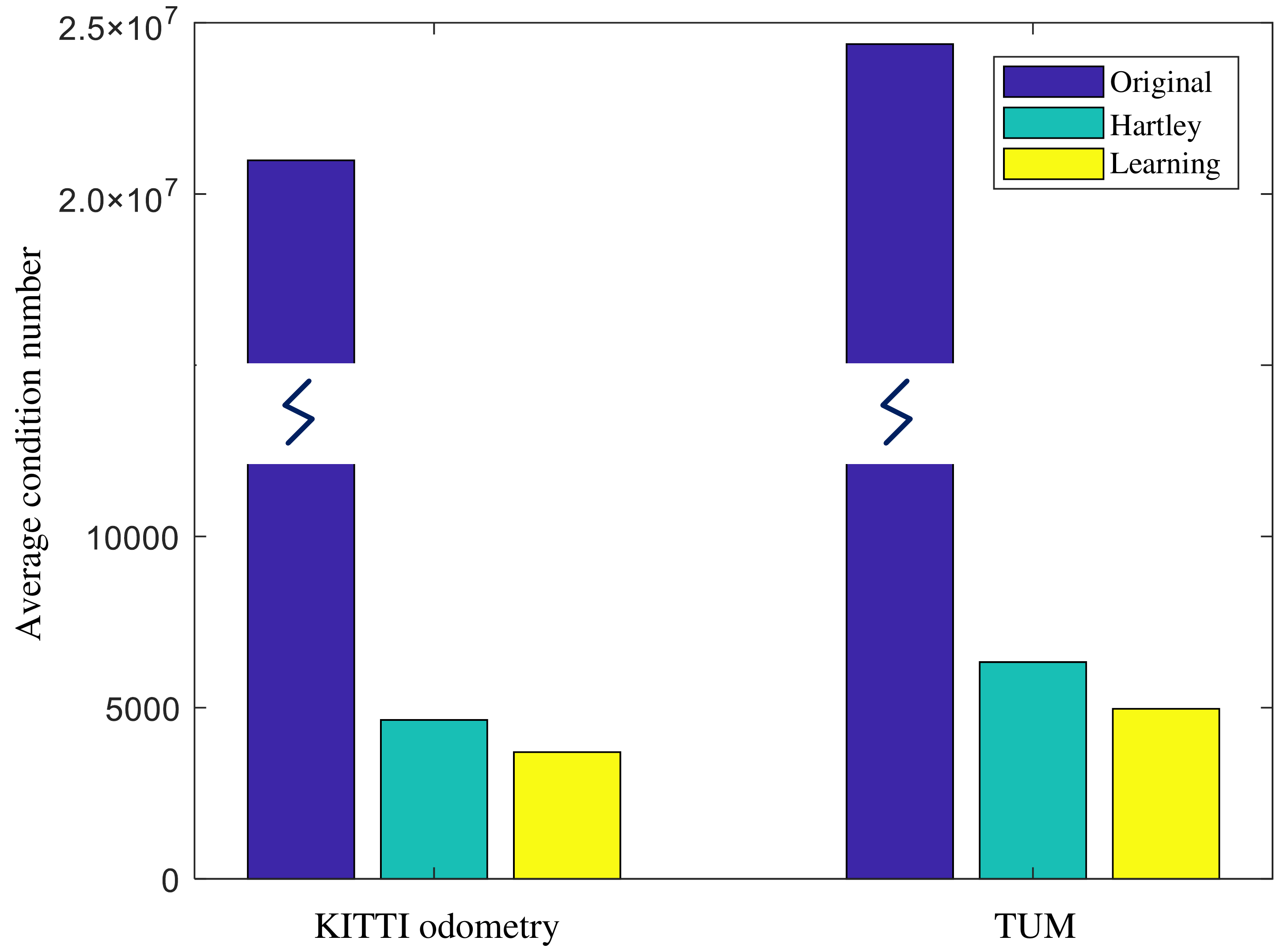}
	\caption{Effect of diverse normalization schemes on the average condition number. \label{fig:fig12_eccv}}
 \vspace{-2.5mm}
\end{figure}
\begin{figure}[htbp]
	\centering
	\includegraphics[width=0.412\textwidth]{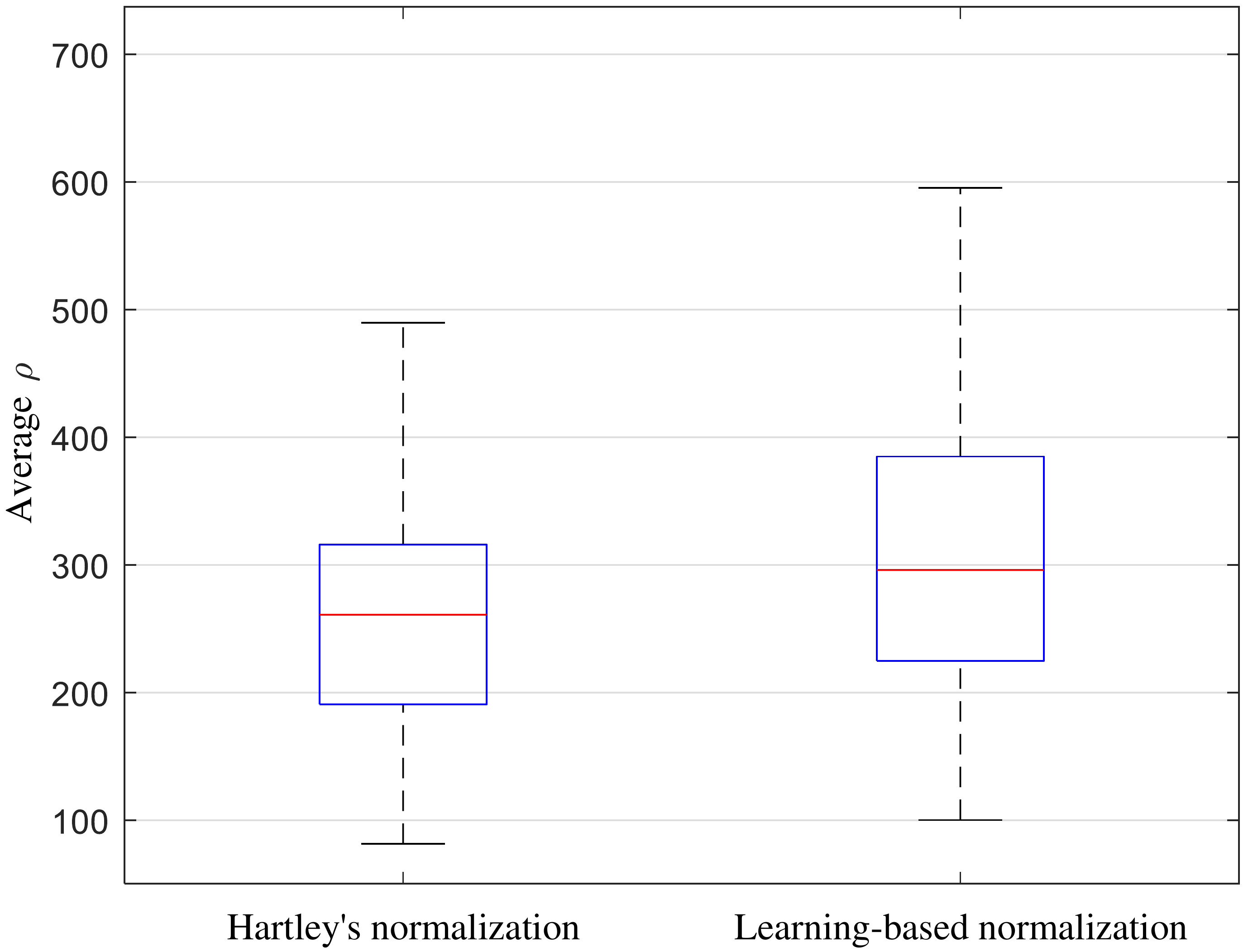}
	\caption{Average $\rho$ of every 100 consecutive samples in the KITTI testing set.\label{fig:fig13_eccv}}
 \vspace{-2.5mm}
\end{figure}

%{\color{red}Finally, the average execution times of each input sample based on diverse normalization schemes are shown in Table~\ref{t4}, measuring on an NVIDIA GeForce GTX 1080Ti. }

%\begin{table}[]
%	\caption{ {\color{red} Can we delete the description of time? Average time consumption in microsecond for diverse normalization schemes.} }\label{t4}
%	\centering
%\begin{tabular}{ccc}
%\hline
%                             & Hartley 8-pt           & Learned 8-pt     %     \\ \cline{2-3} 
%\multicolumn{1}{c}{Time (ms)} & \multicolumn{1}{c}{10} & %\multicolumn{1}{c}{50} \\ \hline
%\end{tabular}
%\end{table}

\section{Conclusion}
In this paper, we revisit the classic two-view geometry computation with eight point correspondences and employ CNNs to provide a novel perspective for better normalization. First, we present that the ideal condition number can be obtained by our approach to be more consistent with the following singularity constraint enforcement step. Second, we propose a self-supervised deep neural network to learn a robust normalization scheme for more accurate fundamental matrix estimation. Our approach enables a data-driven estimation pipeline to perform interpretable and generalized fundamental matrix estimation. Our learning-based normalization solution is superior to Hartley's normalization for each input sample,
and is comparable to Hartley's normalization when integrated with RANSAC. 
Its potential advantage is to provide better initial values for non-linear optimization and to afford better interpretability for an ensemble network.
In the future, we plan to design a lightweight network to weigh time and quality, utilize ground truth correspondences or ground truth matching scores to explore supervised two-view geometry estimation, and further extend our deep solution to other multi-view geometry problems such as triangulation, trifocal tensor estimation, etc.

% acknowledgments part
% \begin{acknowledgements}
% We want to thank Jihuang Dai for investigating relevant literature. The authors express their gratitude to the anonymous reviewers and the editor.
% % whose valuable feedback greatly improved the quality of this manuscript
% \end{acknowledgements}

\begin{small}
\vspace{.3in} \noindent \textbf{Abbreviations}

\noindent{2D, two-dimensional; 3D, three-dimensional; CNN, convolutional neural network; DLT, direct linear transformation; RANSAC, random sample consensus; SCE, singularity constraint enforcement; SfM, structure from motion; SLAM, simultaneous localization and mapping; SVD, singular value decomposition.
}

\vspace{.3in} \noindent \textbf{Acknowledgements}

\noindent{We want to thank Jihuang Dai and Xiang Guo for investigating relevant literature. The authors express their gratitude to the anonymous reviewers and the editor.
% whose valuable feedback greatly improved the quality of this manuscript
}

\vspace{.3in} \noindent \textbf{Availability of data and materials}

\noindent{The datasets generated during and/or analyzed during the current study 
are available in the KITTI repository: 
\url{https://www.cvlibs.net/datasets/kitti/eval_odometry.php}.
}

\vspace{.3in} \noindent \textbf{Funding}

\noindent{This work was supported in part by the National Natural Science Foundation of China (No. 62271410) and the National Postdoctoral Innovative Talent Program, China (No. BX20230013).
}

% \noindent{This work was supported in part by the National Natural Science Foundation of China (62271410) and the Fundamental Research Funds for the Central Universities.
% }

\section*{Declarations}

\vspace{.3in} \noindent \textbf{Author contributions}

\noindent{All authors contributed to the study conception and design. Material preparation, theoretical derivation and analysis were performed by FB and DYC. The first draft of the manuscript was written by FB and all authors commented on previous versions of the manuscript. All authors read and approved the final manuscript.
}

\vspace{.3in} \noindent \textbf{Author details}

\noindent{$^1$School of Electronics and Information, Northwestern Polytechnical University and Shaanxi Key Laboratory of Information Acquisition and Processing, Xi'an 710129, China. 
$^2$Department of Art and Technology, Sogang University, Seoul 04107, Korea. 
}

\vspace{.3in} \noindent \textbf{Competing interests}

\noindent{All authors certify that they have no affiliations with or involvement in any organization or entity with any financial or non-financial interest in the subject matter or materials discussed in this manuscript.
}

\end{small}

% BibTeX from reference.bib
%\bibliographystyle{sn-apacite}
% \bibliographystyle{unsrt}
\bibliographystyle{unsrt}

% \bibliography{Normalization_Ref}

\end{document}